\newif\ifsup
\DeclareMathOperator*{\argmin}{arg\,min}
\newcolumntype{C}[1]{>{\centering}m{#1}}
\newcommand{\EE}[1]{\mathbb{E}\left[#1\right]}
\newcommand{\Prob}[1]{\mathbb{P}\left\{#1\right\}}
\newcommand{\Regret}{\mathcal{R}}
\newcommand{\A}{\mathcal{A}_{Q}}
\newcommand{\R}{\mathbb{R}}
\newcommand{\N}{\mathbb{N}}
\newcommand{\one}[1]{\mathds{1}_{\left\{#1\right\}}}
\newcommand{\ceil}[1]{\left\lceil #1 \right\rceil}
\newcommand{\floor}[1]{\left\lfloor #1 \right\rfloor}
\newcommand{\PCSB}{\mathcal{P}_{\small{\text{CSB}}}}
\newcommand{\PMP}{\mathcal{P}_{\small{\text{MP}}}}
\newcommand{\PCoSB}{\mathcal{P}_{\small{\text{CoSB}}}}
\newcommand{\bmu}{\boldsymbol{\mu}}
\newcommand{\btheta}{\boldsymbol{\theta}}
\newcommand{\ba}{\boldsymbol{a}}
\newtheorem{thm}{Theorem}
\newtheorem{cor}{Corollary}
\newtheorem{defi}{Definition}
\title{	
	Censored Semi-Bandits: A Framework for Resource Allocation with Censored Feedback
}
\author{
	Arun Verma \\
	Department of IEOR\\
	IIT Bombay, India\\
	\texttt{v.arun@iitb.ac.in} \\
	\And
    Manjesh K. Hanawal \\
	Department of IEOR\\
	IIT Bombay, India\\
	\texttt{mhanwal@iitb.ac.in} \\
	\AND
	Arun Rajkumar \\
	Department of CSE\\
	IIT Madras, India\\
	\texttt{arunr@cse.iitm.ac.in} \\
	\And
	Raman Sankaran \\
	LinkedIn India\\
	Bengaluru, India\\
	\texttt{rsankara@linkedin.com} \\
}
\begin{document}
	
	\maketitle
	
	\begin{abstract}
        In this paper, we study {\em Censored Semi-Bandits}, a novel variant of the semi-bandits problem. The learner is assumed to have a fixed amount of resources, which it allocates to the arms at each time step. The loss observed from an arm is random and depends on the amount of resources allocated to it. More specifically, the loss equals zero if the allocation for the arm exceeds a constant (but unknown) threshold that can be dependent on the arm. Our goal is to learn a feasible allocation that minimizes the expected loss. The problem is challenging because the loss distribution and threshold value of each arm are unknown. We study this novel setting by establishing its `equivalence' to Multiple-Play Multi-Armed Bandits (MP-MAB) and Combinatorial Semi-Bandits. Exploiting these equivalences, we derive optimal algorithms for our setting using existing algorithms for MP-MAB and Combinatorial Semi-Bandits. Experiments on synthetically generated data validate performance guarantees of the proposed algorithms.
	\end{abstract}

	\section{Introduction}
	\label{sec:introduction}
	%!TEX root =  main.tex

Many real-life sequential resource allocation problems have a censored feedback structure. Consider, for instance, the problem of optimally allocating patrol officers (resources) across various locations in a city on a daily basis to combat \emph{opportunistic} crimes. Here, a perpetrator picks a location (e.g., a deserted street) and decides to commit a crime (e.g., mugging) but does not go ahead with it if a patrol officer happens to be around in the vicinity. Though the true potential crime rate depends on the latent decision of the perpetrator, one observes feedback only when the crime is committed. Thus crimes that were planned but not committed get censored.  This model of censoring is quite general and finds applications in several resources allocation problems such as police patrolling \citep{NSE10_curtin2010determining}, traffic regulations and enforcement \citep{AOR14_adler2014location,IJCAI17_rosenfeld2017security}, poaching control \citep{AAMAS16_nguyen2016capture,AAMAS18_gholami2018adversary}, supplier selection \citep{NIPS16_abernethy2016threshold}, advertisement budget allocation \citep{UAI14_lattimore2014optimal}, among many others.

Existing approaches that deal with censored feedback in resource allocation problems fall into two broad categories. \cite{NSE10_curtin2010determining,AOR14_adler2014location,IJCAI17_rosenfeld2017security} learn good resource allocations from historical data. \cite{AAMAS16_nguyen2016capture,AAMAS18_gholami2018adversary,AAMAS16_zhang2016using,IJCAI18_sinha2018stackelberg} pose the problem in a game-theoretic framework (opportunistic security games) and propose algorithms for optimal resource allocation strategies. While the first approach fails to capture the sequential nature of the problem, the second approach is agnostic to the user (perpetrator) behavior modeling. In this work, we balance these two approaches by proposing a simple yet novel threshold-based behavioral model, which we term as \emph{Censored Semi Bandits} (CSB). The model captures how a user opportunistically reacts to an allocation.

In the first variation of our proposed behavioral model, we assume the threshold (user behavioral) is uniform across arms (locations). We establish that this setup is `equivalent' to Multiple-Play Multi-Armed Bandits (MP-MAB), where a fixed number of armed is played in each round. We also study the more general variation where the threshold is arm dependent. We establish that this setup is equivalent to Combinatorial Semi-Bandits, where a subset of arms to be played is decided by solving a combinatorial $0$-$1$ knapsack problem.

Formally, we tackle the sequential nature of the resource allocation problem by establishing its equivalence to the MP-MAB and combinatorial semi-bandits framework. By exploiting this equivalence for our proposed threshold-based behavioral model, we develop novel resource allocation algorithms by adapting existing algorithms and provide optimal regret guarantees for the same.

\paragraph{Related Work:} The problem of resource allocation for tackling crimes has received significant interest in recent times. \cite{NSE10_curtin2010determining} employ a static maximum coverage strategy for spatial police allocation while  \cite{AAMAS16_nguyen2016capture} and \cite{AAMAS18_gholami2018adversary} study game-theoretic and adversarial perpetrator strategies. We, on the other hand, restrict ourselves to a non-adversarial setting. \citep{AOR14_adler2014location} and \cite{IJCAI17_rosenfeld2017security} look at traffic police resource deployment and consider the optimization aspects of the problem using real-time traffic, etc., which differs from the main focus of our work. \cite{AAMAS15_zhang2015keeping} investigate dynamic resource allocation in the context of police patrolling and poaching for opportunistic criminals.  Here they attempt to learn a model of criminals using a dynamic Bayesian network. Our approach proposes simpler and realistic modeling of perpetrators where the underlying structure can be exploited efficiently.

We pose our problem in the exploration-exploitation paradigm, which involves solving the MP-MAB and combinatorial 0-1 knapsack problem. It is different from the bandits with Knapsacks setting studied in \cite{JACM18_badanidiyuru2018bandits}, where resources get consumed in every round. The work of \cite{NIPS16_abernethy2016threshold} and \cite{ICML18_jain2018firing} are similar to us in the sense that they are also threshold-based settings. However, the thresholding we employ naturally fits our problem and significantly differs from theirs. Specifically, their thresholding is on a sample generated from an underlying distribution, whereas we work in a Bernoulli setting where the thresholding is based on the allocation. Resource allocation with semi-bandits feedback   \citep{UAI14_lattimore2014optimal,NIPS15_lattimore2015linear,ALT18_dagan18a} study a related but less general setup where the reward is based only on allocation and a hidden threshold. Our setting requires an additional unknown parameter for each arm, a `mean loss,' which also affects the reward.

Allocation problems in the combinatorial setting have been explored in \cite{JCSS12_cesa2012combinatorial,ICML13_chen2013combinatorial,NIPS14_rajkumar2014online,NIPS15_combes2015combinatorial,NIPS16_chen2016combinatorial,ICML18_wang2018thompson}. Even though these are not related to our setting directly, we derive explicit connections to a sub-problem of our algorithm to the setup of \cite{ICML15_komiyama2015optimal} and \cite{ICML18_wang2018thompson}.

% \textbf{Organization of the paper:} In Section \ref{sec:problemSetting}, we introduce the resource allocation problem with censored feedback. In Section \ref{sec:same_theta}, we analyze the behavioral model where the threshold for all locations is identical and develop the algorithms and their associated guarantees. In Section \ref{ssec:different_theta}, we analyze a more general case and give an algorithm for varying thresholds. In Section \ref{sec:experiments}, we perform the experimental evaluations to justify the efficacy of our algorithms. All missing proofs appear in supplementary.

	\section{Problem Setting}		
	\label{sec:problemSetting}
	%!TEX root =  main.tex

We consider a sequential learning problem where $K$ denotes the number of arms (locations), and $Q$ denotes the amount of divisible resources. The loss at arm $i \in [K]$ where $[K] := \{1, 2, \ldots, K\}$, is Bernoulli distributed with rate $\mu_i \in [0,1]$. Each arm can be assigned a fraction of resource that decides the feedback observed and the loss incurred from that arm -- if the allocated resource is smaller than a certain threshold\footnote{One could consider a smooth function instead of a step function, but the analysis is more involved, and our results need not generalize straightforwardly.}, the loss incurred is the realization of the arm, and it is observed. Otherwise, the realization is unobserved, and the loss incurred is zero. Let $\ba:=\{a_i: i\in [K]\}$, where $a_i \in [0,1]$, denotes the resource allocated to arm $i$. For each $i \in [K]$, let $\theta_i \in (0,1]$ denotes the threshold associated with arm $i$ and is such that a loss is incurred at arm $i$ only if $a_i< \theta_i$.
An allocation vector $\ba$ is said to be feasible if $\sum_{i \in [K]} a_i \leq Q$ and set of all feasible allocations is denoted as $\A$. The goal is to find a feasible resource allocation that results in a maximum reduction in the mean loss incurred.

In our setup, resources may be allocated to multiple arms. However, loss from each of the allocated arms may not be observed depending on the amount of resources allocated to them. We thus have a version of the partial monitoring system \citep{MOR06_cesa2006regret,ICML12_bartok2012partial,MOR14_bartok2014partial} with semi-bandit feedback, and we refer to it as censored semi-bandits (CSB). The vectors $\btheta:=\{\theta_i\}_{ i\in [K]}$ and $\bmu:=\{\mu_j\}_{i \in [K]}$ are unknown and identify an instance of CSB problem. Henceforth we identify a CSB instance as $P=(\bmu,\btheta,Q) \in [0,1]^{K}\times (0,1]^K \times \R_+$ and denote collection of CSB instances as $\mathcal{P}_{\small{\text{CSB}}}$. As $K = |\mu|$, $K$ is known (implicitly) from an instance of CSB. For simplicity of discussion, we assume that $\mu_1\le\mu_2 \le \ldots \le\mu_K$, but the algorithms are not aware of this ordering. For instance $P \in \PCSB$, the optimal allocation can be computed by the following $0$-$1$ knapsack problem
\begin{equation}
	\ba^\star  \in \argmin _{\ba \in \A} \sum_{i=1}^K \mu_i \one{a_i< \theta_i}.
\end{equation}

Interaction between the environment and a learner is given in Algorithm \ref{alg:protocol}.

\begin{algorithm}[H]
	For each round $t$: 
	\begin{enumerate}
		\item \textbf{Environment} generates a vector $\boldsymbol{X_t} = (X_{t,1}, X_{t,2},\ldots, X_{t,K}) \in \{0,1\}^K$, where $\EE{X_{t,i}}=\mu_i$ and the sequence $(X_{t,i})_{t\geq 1}$ is i.i.d. for all $i\in [K]$.
		\item \textbf{Learner} picks an allocation vector $\ba_t \in \A$.
		\item \textbf{Feedback and Loss:} The learner observes a random feedback $\boldsymbol{Y_t}=\{Y_{t,i}: i\in [K]\}$, where $Y_{t,i}=X_{t,i}\one{a_{t,i}<\theta_i}$ and incurs loss $\sum_{i \in [K]}Y_{t,i}$.
	\end{enumerate}
	\caption{CSB Learning Protocol on instance $(\bmu, \btheta, Q)$}
	\label{alg:protocol}
\end{algorithm}

The goal of the learner is to find a feasible resource allocation strategy at every round such that the cumulative loss is minimized. Specifically, 
we measure the performance of a policy that selects allocations $\{\ba_t\}_{t\geq1}$ over a period of $T$ in terms of expected (pseudo) regret given by
\begin{equation}
	\mathbb{E}[\Regret_T] = \sum_{t=1}^T\sum_{i=1}^K\mu_i \left(\one{a_{t,i}< \theta_i} -  \one{a^\star_i< \theta_i}\right).
\end{equation}

A good policy should have sub-linear expected regret, i.e., $\EE{\Regret_T}/T \rightarrow 0$ as $T \rightarrow \infty$.

	\section{Identical Threshold for All Arms}
	\label{sec:same_theta}
	%!TEX root =  main.tex

In this section, we focus on the particular case of the censored semi bandits problem where $\theta_i=\theta_c$ for all $i \in [K]$. With abuse of notation, we continue to denote an instance of CSB with the same threshold as $(\bmu, \theta_c, Q)$, where $\theta_c \in (0,1]$ is the same threshold. 

\begin{defi}
	For a given loss vector $\bmu$ and resource $Q$, we say that thresholds $\theta_c$ and $\hat{\theta}_c$ are {\em allocation equivalent} if the following holds:
	\begin{equation*}
		\min_{\ba \in \A} \sum_{i=1}^K \mu_i \one{a_i< \theta_c}  = 
		\min_{\ba \in \A} \sum_{i=1}^K \mu_i \one{a_i< \hat{\theta}_c}.
	\end{equation*}
\end{defi}

Though $\theta_c$ can take any value in the interval $(0,1]$, an allocation equivalent to it can be confined to a finite set. The following lemma shows that a search for an allocation equivalent can be restricted to $\ceil{K-Q+1}$ elements.
\begin{restatable}{lem}{ThetaSet}
	\label{lem:thetaSet}
	For any $\theta_c \in (0,1]$ and $Q \ge \theta_c$, let $M=\min\{\floor{Q/\theta_c}, K\}$ and $\hat{\theta}_c=Q/M$. 
	%    Let $\Theta = \{Q/K, Q/(K-1), \cdots, \min\{1, Q\}\}$. 
	Then  $\theta_c$ and $\hat{\theta}_c$ are allocation equivalent. Further, $\hat{\theta}_c \in \Theta$ where $\Theta = \{Q/K, Q/(K-1), \cdots, \min\{1, Q\}\}$. 
\end{restatable}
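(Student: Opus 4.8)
The plan is to understand the structure of the optimal allocation for a common threshold $\theta_c$ and show it depends only on the integer $M = \min\{\floor{Q/\theta_c}, K\}$. Fix $\bmu$ with $\mu_1 \le \mu_2 \le \dots \le \mu_K$ and resource $Q$. Given a threshold $\theta \in (0,1]$, an arm $i$ contributes $\mu_i$ to the objective $\sum_i \mu_i \one{a_i < \theta}$ unless we "cover" it by allocating at least $\theta$ to it. With budget $Q$, we can cover at most $\floor{Q/\theta}$ arms (each covered arm needs at least $\theta$, and there is no benefit to allocating more than $\theta$ to a covered arm or any positive amount to an uncovered arm); we can also never cover more than $K$ arms. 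Hence the number of arms we can cover is exactly $m := \min\{\floor{Q/\theta}, K\}$, and since the contribution of each arm is nonnegative and we pay $\mu_i$ for exactly the uncovered arms, the optimal strategy covers the $m$ arms with the largest means, giving $\min_{\ba \in \A}\sum_i \mu_i \one{a_i < \theta} = \sum_{i=1}^{K-m}\mu_i$. The key point is that this optimal value is a function of $\theta$ only through $m = \min\{\floor{Q/\theta}, K\}$.

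Next I would apply this to both $\theta_c$ and $\hat\theta_c = Q/M$. For $\theta_c$ the relevant integer is $M = \min\{\floor{Q/\theta_c}, K\}$ by definition. For $\hat\theta_c = Q/M$ I need to check $\hat\theta_c \in (0,1]$ so the above applies: since $Q \ge \theta_c$ we have $\floor{Q/\theta_c} \ge 1$, so $M \ge 1$, and since $Q \le K$ we get... here I should be a little careful — if $M < \floor{Q/\theta_c}$, i.e. $M = K < \floor{Q/\theta_c}$, then $\hat\theta_c = Q/K \le 1$; if $M = \floor{Q/\theta_c} \le K$, then $\hat\theta_c = Q/\floor{Q/\theta_c} \ge \theta_c > 0$ and also $\hat\theta_c = Q/M$ with $M = \floor{Q/\theta_c} \ge Q/\theta_c \cdot (\theta_c/\theta_c)\dots$; more simply $\floor{Q/\theta_c} \ge 1$ and $\floor{Q/\theta_c} \cdot \hat\theta_c = Q \le K$ so if $M \le K$ then $\hat\theta_c \le K/M$, and to get $\hat\theta_c \le 1$ I use $M = \floor{Q/\theta_c} \ge Q$ when $\theta_c \le 1$, hence $\hat\theta_c = Q/M \le 1$. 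Either way $\hat\theta_c \in (0,1]$. Then I compute $\min\{\floor{Q/\hat\theta_c}, K\} = \min\{\floor{M}, K\} = \min\{M, K\} = M$, since $M \le K$ by construction. Therefore the optimal value for $\hat\theta_c$ uses the same integer $M$ as $\theta_c$, and by the structural formula the two optimal values coincide, establishing allocation equivalence.

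Finally, to see $\hat\theta_c \in \Theta = \{Q/K, Q/(K-1), \dots, \min\{1,Q\}\}$, note $\hat\theta_c = Q/M$ with $M$ an integer satisfying $1 \le M \le K$; the elements of $\Theta$ are exactly $\{Q/M : M \in \{1, \dots, K\}\}$ (with the convention that the $M=1$ term $Q$ is clipped to $\min\{1,Q\}$, which matches since we showed $\hat\theta_c \le 1$), so $\hat\theta_c \in \Theta$.

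The main obstacle I anticipate is the boundary bookkeeping: verifying that $\hat\theta_c$ lands in $(0,1]$ in all cases (the two regimes $M = \floor{Q/\theta_c}$ versus $M = K$) and that $\floor{Q/\hat\theta_c} = M$ exactly rather than something larger — this requires using $M \le K$ together with $M \le Q/\theta_c$ carefully. The conceptual core — that the optimum is "cover the $M$ heaviest arms, pay for the rest" — is straightforward once one observes there is no gain from over-allocating to a covered arm or allocating anything to an uncovered arm.
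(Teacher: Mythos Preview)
Your approach is essentially the same as the paper's: both reduce the problem to the observation that the optimal value equals $\sum_{i=1}^{K-M}\mu_i$ (cover the $M$ heaviest arms, pay for the rest), so it depends on the threshold only through $M=\min\{\lfloor Q/\theta\rfloor,K\}$, and then check that $\theta_c$ and $\hat\theta_c=Q/M$ yield the same $M$. The paper's write-up is terser---it splits off the trivial case $\lfloor Q/\theta_c\rfloor\ge K$ and in the remaining case uses $\hat\theta_c\ge\theta_c$ directly (so allocating $\hat\theta_c$ to an arm also covers it under $\theta_c$, and $M$ such allocations exhaust $Q$ exactly)---but the content is the same, and your explicit computation $\lfloor Q/\hat\theta_c\rfloor=\lfloor M\rfloor=M$ is a fine alternative.
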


Let $M= \min\{\lfloor Q/\theta_c\rfloor, K\}$. In the following, when arms are sorted in the increasing order of mean losses, we refer to the last $M$ arms as the \emph{bottom-}$M$ arms and the remaining arms as \emph{top-}$(K-M)$ arms. It is easy to note that an optimal allocation with the same threshold $\theta_c$ is to allocate $\theta_c$ resource to each of the bottom-$M$ arms and allocate the remaining resources to the other arms. \cref{lem:thetaSet} shows that range of allocation equivalent $\hat{\theta}_c$ for any instance $(\bmu,\theta_c,Q)$ is finite. Once this value is found, the problem reduces to identifying the bottom-$M$ arms and assigning resource $\hat{\theta}_c$ to each one of them to minimize the mean loss. The latter part is equivalent to solving a Multiple-Play Multi-Armed Bandits problem, as discussed next.

\subsection{Equivalence to Multiple-play Multi-armed Bandits}
In stochastic Multiple-Play Muti-Armed Bandits (MP-MAB), a learner can play a subset of arms in each round known as superarm \citep{TAC1987_MultiPlayBandits_Anatharam}. The size of each superarm is fixed (and known). The mean loss of a superarm is the sum of the means of its constituent arms. In each round, the learner plays a superarm and observes the loss from each arm played (semi-bandit). The goal of the learner is to select a superarm that has the smallest mean loss. A policy in MP-MAB selects a superarm in each round based on the past information. Its performance is measured in terms of regret defined as the difference between cumulative loss incurred by the policy and that incurred by playing an optimal superarm in each round. Let $(\bmu, m) \in [0,1]^K \times \N_+$ denote an instance of MP-MAB where $\bmu$ denote the mean loss vector, and $m \le K$ denotes the size of each superarm. Let $\PCSB^c \subset \PCSB$ denote the set of CSB instances with the same threshold for all arms. For any  $(\bmu,\theta_c, Q) \in \PCSB^c$ with $K$ arms and known threshold $\theta_c$, let $(\bmu, m)$ be an instance of MP-MAB with $K$ arms and each arm has the same Bernoulli distribution as the corresponding arm in the CSB instance with $m=K-M$, where $M=\min\{\floor{Q/\theta_c}, K\}$ as earlier. Let $\PMP$ denote the set of resulting MP-MAB problems and $f: \PCSB \rightarrow \PMP$ denote the above transformation.

Let $\pi$ be a policy on $\PMP$. $\pi$ can also be applied on any $(\bmu,\theta_c, Q) \in \PCSB^c$ with known $\theta_c$ to decide which set of arms are allocated resource as follows: in round $t$, let the information $(L_1, Y_1, L_2,Y_2, \ldots, L_{t-1}, Y_{t-1})$ collected on an CSB instance, where $L_s$ is the set of $K-M$ arms where no resource is applied and $Y_s$ is the samples observed from these arms. In round $t$, this information is given to $\pi$ which returns a set $L_t$ with $K-M$ elements. Then all arms other than arms in $L_t$ are given resource $\theta_c$. Let this policy on $(\bmu,\theta_c, Q) \in \PCSB^c$ be denoted as $\pi^\prime$. In a similar way a policy $\beta$ on $\PCSB$ can be adapted to yield a policy for $\PMP$ as follows: in round $t$, let the information $(S_1, Y_1, S_2,Y_2, \ldots, S_{t-1}, Y_{t-1})$ collected on an MP-MAB instance, where $S_s$ is the superarm played in round $s$ and $Y_s$ is the associated loss observed from each arms in $S_s$,  is given to $\pi$ which returns a set $S_t$ of $K-M$ arms where no resources has to be applied. The superarm corresponding to $S_t$ is then played. Let this policy on $\PMP$ be denoted as $\beta^\prime$. Note that when $\theta_c$ is known, the mapping is invertible. The next proposition gives regret equivalence between the MP-MAB problem and CSB problem with a known same threshold.

\begin{restatable}{prop}{RegretEquiST}
	\label{prop:RegretEquiST}
	Let $P:=(\bmu,\theta_c, Q) \in \PCSB^c$ with known $\theta_c$ then the regret of $\pi$ on $P$ is same as the regret of $\pi^\prime$ on $f(P)$. Similarly, let $P^\prime:=(\bmu,m) \in \PMP$, then regret of a policy $\beta$ on $P^\prime$ is same as the regret of $\beta^\prime$ on $f^{-1}(P^\prime)$. Thus the set $\PCSB$ with a known $\theta_c$ is 'regret equivalent' to $\PMP$, i.e., $\Regret(\PCSB^c)=\Regret(\PMP)$. 
\end{restatable}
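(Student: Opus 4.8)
The plan is to build an information-preserving coupling between the CSB instance $P=(\bmu,\theta_c,Q)$ and the MP-MAB instance $f(P)=(\bmu,m)$ with $m=K-M$, under which the policy $\pi$ (on the MP-MAB side) and its induced policy $\pi^\prime$ (on the CSB side) see statistically identical inputs and therefore produce the same sequence of played sets, so that the two regret processes agree term by term in expectation; the reverse direction is handled symmetrically, and the two claims together give the ``regret equivalence''. The engine behind this is the elementary observation that in CSB with a common threshold, an allocation only matters through which arms it ``covers'' (allocates $\ge\theta_c$, incurring zero loss and revealing nothing) versus leaves ``uncovered'' (allocates $<\theta_c$, incurring the Bernoulli loss and revealing the sample $X_{t,i}$ since then $Y_{t,i}=X_{t,i}$).

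First I would pin down the offline optima on both sides. Since allocating strictly more than $\theta_c$ to an arm, or strictly between $0$ and $\theta_c$, is wasteful, an optimal CSB allocation can be taken to assign exactly $\theta_c$ to a subset of arms and $0$ to the rest; the budget $Q\ge\theta_c$ permits covering at most $M=\min\{\floor{Q/\theta_c},K\}$ arms (feasibility: $M\theta_c\le\floor{Q/\theta_c}\theta_c\le Q$), and since covering a higher-mean arm is never worse, $\ba^\star$ covers the $M$ largest-mean (bottom-$M$) arms and leaves the $m=K-M$ smallest-mean arms uncovered, with optimal loss $\sum_{i=1}^m\mu_i$ under the assumed ordering. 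On the MP-MAB side the optimal size-$m$ superarm is exactly $\{1,\dots,m\}$ with the same value. Hence $\one{a^\star_i<\theta_c}$ equals the indicator that $i$ lies in the optimal MP-MAB superarm, and the two benchmarks coincide. (The case $M=K$, i.e.\ $m=0$, makes both regrets identically zero and can be disposed of first.)

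Next I would unfold the regret. By construction $\pi^\prime$ leaves uncovered precisely the size-$m$ set $L_t$ that $\pi$ returns, so $\one{a_{t,i}<\theta_c}=\one{i\in L_t}$ and
\[
\mathbb{E}[\Regret_T]\big|_{\pi^\prime \text{ on } P}\;=\;\mathbb{E}\Bigl[\sum_{t=1}^T\Bigl(\sum_{i\in L_t}\mu_i-\sum_{i=1}^m\mu_i\Bigr)\Bigr],
\]
which is exactly the MP-MAB regret of playing the superarms $L_1,\dots,L_T$. It then remains to show $L_t$ has the same law as the superarm $S_t$ that $\pi$ plays on $f(P)$: drive both worlds by the same i.i.d.\ arrays $(X_{t,i})$; an uncovered arm $i$ in CSB has $a_{t,i}=0<\theta_c$, hence reports $Y_{t,i}=X_{t,i}$, which is precisely the semi-bandit feedback $\pi$ would receive for a played arm in MP-MAB, while covered arms report $0$ and are not passed to $\pi$. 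By induction on $t$ the history fed to $\pi$ is identical in the two worlds, so $\pi$ outputs the same set, i.e.\ $L_t=S_t$, giving equality of expected regrets. The direction from a policy $\beta$ on $\PCSB$ to $\beta^\prime$ on $f^{-1}(P^\prime)$ is the mirror image: feed $\beta$ the MP-MAB history read as a CSB history, take the size-$m$ set it designates uncovered, and play the corresponding superarm; the same coupling shows the regrets match. Since $f$ (with $\theta_c$ known) is invertible and both induced maps preserve regret instance by instance, the attainable regret functions over the two families coincide, i.e.\ $\Regret(\PCSB^c)=\Regret(\PMP)$.

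The main obstacle I anticipate is the first step's bookkeeping: carefully justifying that an optimal CSB allocation covers exactly the $M$ highest-mean arms with exactly $\theta_c$ each (so that the CSB benchmark reduces to $\sum_{i=1}^m\mu_i$), and ensuring the induced policies always use an uncovered set of the \emph{exact} size $m$ so the two regret sums line up cleanly. Once the feedback identity $Y_{t,i}=X_{t,i}$ on uncovered arms is in hand, the coupling and the induction on $t$ are routine.
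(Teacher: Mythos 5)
Your proposal is correct and takes essentially the same route as the paper's proof: identify the set of arms left uncovered with the superarm played, observe that the optimal CSB allocation covers the bottom-$M$ arms so the two benchmarks coincide, and match the per-round losses in both directions to conclude $\Regret(\PCSB^c)=\Regret(\PMP)$. Your explicit coupling and induction showing the histories fed to $\pi$ agree (via $Y_{t,i}=X_{t,i}$ on uncovered arms) merely spells out what the paper asserts directly from the definition of the induced policy $\pi^\prime$.
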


{\bf Lower bound}: As a consequence of the above equivalence and one to one correspondence between the MP-MAB and CSB with the same threshold (known), a lower bound on MP-MAB is also a lower bound on CSB with the same threshold. Hence the following lower bound given for any strongly consistent algorithm \cite[Theorem 3.1]{TAC1987_MultiPlayBandits_Anatharam} is also a lower bound on the CSB problem with the same threshold:
\begin{equation}
	\label{eqn:LowerBound}
	\lim_{T\rightarrow \infty}\Prob{\frac{\mathbb{E}[\Regret_T]}{\log T} \geq \mbox{$\sum_{i \in [K]\setminus [K-M]}$}\frac{(1-o(1))(\mu_i-\mu_{K-M})}{d(\mu_{K-M}, \mu_i)}} = 1,
\end{equation}
where $d(p,q)$ is the Kullback-Leibler (KL) divergence between two Bernoulli distributions with parameter $p$ and $q$. Also note that we are in loss setting.

The above proposition suggests that any algorithm which works well for the MP-MAB also works well for the CSB once the threshold is known. Hence one can use algorithms like MP-TS \citep{ICML15_komiyama2015optimal} and ESCB \citep{NIPS15_combes2015combinatorial} once an allocation equivalent to $\theta_c$ is found. MP-TS uses Thompson Sampling, whereas ESCB uses UCB (Upper Confidence Bound) and KL-UCB type indices. One can use any of these algorithms. But we adapt MP-TS to our setting as it gives the better empirical performance and shown to achieve optimal regret bound for Bernoulli distributions.

\subsection{Algorithm: \ref{alg:CSB-ST}}
We develop an algorithm named \ref{alg:CSB-ST} for solving the Censored Semi Bandits problem with Same Threshold. It exploits result in \cref{lem:thetaSet} and equivalence established in Proposition \ref{prop:RegretEquiST} to learn a good estimate of threshold and minimize the regret using a MP-MAB algorithm. \ref{alg:CSB-ST} consists of two phases, namely, threshold estimation and regret minimization.
\begin{algorithm}[H] 
	\renewcommand{\thealgorithm}{\bf CSB-ST}
	\floatname{algorithm}{}
	\caption{Algorithm for solving the Censored Semi Bandits problem with Same Threshold}
	\label{alg:CSB-ST}
	\begin{algorithmic}[1]
		\Statex 
		\textbf{Input:} $K, Q, \delta, \epsilon$
		\vspace{1.5mm}
		\Statex \textbackslash\textbackslash \textbf{ Threshold Estimation Phase} \textbackslash\textbackslash
		\vspace{0.5mm}
		\State Initialize $C=0, l=0, u = |\Theta|, i = \ceil{u/2}$
		\State Set $\Theta$ as \cref{lem:thetaSet}, $T_{\theta_s}= 0, W = {\log(\log_2(|\Theta|)/\delta)}/{(\max\{1, \floor{Q}\}\log(1/(1-\epsilon))})$ 
		\While{$i \ne u$}
			\State Set $\hat{\theta}_c = \Theta[i]$
			\State $A_t \leftarrow $ first ${Q}/{\hat{\theta}_c}$ arms. Allocate $\hat{\theta}_c$ resource to each arm $i \in A_t$
			\State If loss observed for any arm $i \in A_t$ then set $l=i, i = l + \ceil{\frac{u - l}{2}}, C=0$ else $C = C + 1$
			\State If $C = W$ then set $u=i,i = u - \floor{\frac{u - l}{2}}, C=0$
			\State $T_{\theta_s} = T_{\theta_s}+1$
		\EndWhile
		\vspace{1.5mm}
		\Statex \textbackslash\textbackslash \textbf{ Regret Minimization Phase} \textbackslash\textbackslash
		\vspace{0.5mm}
		\State Set $M = Q/\hat\theta_c$ and $\forall i \in [K]: S_i = 1, F_i = 1$ 
		\For{$t=T_{\theta_s}+1, T_{\theta_s} + 2, \ldots, T$}
			\State $\forall i \in [K]: \hat{\mu}_i(t) = \beta(S_i, F_i)$
			\State $L_t \leftarrow$ (K-M) arms with smallest  estimates
			\State $\forall i \in L_t:$ allocate no resource to arm $i$. $\forall j \in K\setminus L_t:$ allocate $\hat\theta_c$ resources to arm $j$
			\State $\forall i \in L_t:$ observe $X_{t,i}$. Update $S_i = S_i+X_{t,i}$ and $F_i = F_i+1-X_{t,i}$
		\EndFor
	\end{algorithmic}
\end{algorithm}

\textbf{Threshold Estimation Phase:}
This phase finds a threshold $\hat{\theta}_c$ that is allocation equivalent to the underlying threshold $\theta_c$ with high probability by doing a binary search over the set $\Theta=\{Q/K, Q/(K-1), \dots, \min\{1, Q\}\}$.  The elements of $\Theta$ are arranged in increasing order and are candidates for $\theta_c$. The search starts by taking $\hat{\theta}_c$ to be the middle element in $\Theta$ and allocating $\hat{\theta}_c$ resource to first $Q/\hat\theta_c$ arms (denoted as set $A_t$ in Line $5$). If a loss is observed at any of these arms, it implies that $\hat{\theta}_c$ is an underestimate of $\hat{\theta}_c$. All the candidates lower than the current value of $\hat{\theta}_c$ in $\Theta$ are eliminated, and the search is repeated in the remaining half of the elements again by starting with the middle element (Line $6$). If no loss is observed for consecutive $W$ rounds, then $\hat{\theta}_i$ is possibly an overestimate. Accordingly, all the candidates larger than the current value of $\hat{\theta}_c$ in $\Theta$ are eliminated, and the search is repeated starting with the middle element in the remaining half (Line $7$). The variable $C$ keeps track of the number of the consecutive rounds for which no loss is observed. It changes to $0$ either after observing a loss or if no loss is observed for consecutive $W$ rounds.

Note that if $\hat{\theta}_c$ is an underestimate and no loss is observed for consecutive $W$ rounds, then $\hat{\theta}_c$ will be reduced, which leads to a wrong estimate of $\hat{\theta}_c$. To avoid this, we set the value of $W$ such that the probability of happening of such an event is low. The next lemma gives a bound on the number of rounds needed to find allocation equivalent for threshold $\theta_c$ with high probability.

\begin{restatable}{lem}{sameThresholdEstRounds}
	\label{lem:sameThresholdEstRounds}
	Let $(\bmu, \theta_c, Q)$ be an CSB instance such that $\mu_1 \geq  \epsilon>0$. Then with probability at least $1-\delta$, the number of rounds needed by threshold estimation phase of \ref{alg:CSB-ST} to find the  allocation equivalent for threshold $\theta_c$ is bounded as 
	\begin{equation*}
		T_{\theta_s}\le \frac{\log(\log_2(|\Theta|)/\delta)}{\max\{1, \floor{Q}\}\log\left({1}/{(1-\epsilon)}\right)}\log_2(|\Theta|).
	\end{equation*}
\end{restatable}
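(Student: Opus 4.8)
The plan is to split the argument into a purely deterministic accounting of how long the threshold-estimation phase can last and a probabilistic argument that, with probability at least $1-\delta$, the binary search never steps in the wrong direction. For the deterministic part, observe that the variables $(l,u)$ of the threshold-estimation phase evolve exactly as in a textbook binary search over the totally ordered set $\Theta$: the index $i$ is updated only on line~6 (after a loss is observed, setting $l\leftarrow i$) or on line~7 (after the no-loss counter reaches $W$, setting $u\leftarrow i$), and in either case the new index is the midpoint of the shrunken interval, so $u-l$ is at least halved at every update. Hence, \emph{irrespective} of the realizations $\boldsymbol{X_t}$, the phase consists of at most $\ceil{\log_2(|\Theta|)}$ \emph{tests}, where a test is a maximal block of consecutive rounds during which $\hat\theta_c$ is held fixed. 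Moreover each test lasts at most $W$ rounds, since it ends as soon as a loss is seen (line~6) or $C$ reaches $W$ (line~7). Consequently the whole phase runs for at most $W\ceil{\log_2(|\Theta|)}$ rounds, so it only remains to show that, on an event of probability at least $1-\delta$, the search terminates at a threshold that is allocation equivalent to $\theta_c$.

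For the probabilistic part I would analyse a single test according to whether its candidate under- or over-estimates $\theta_c$. If $\hat\theta_c\ge\theta_c$, then each arm $i\in L_t$ gets allocation $\hat\theta_c\ge\theta_c$, hence $Y_{t,i}=X_{t,i}\one{\hat\theta_c<\theta_c}=0$ deterministically, so $C$ necessarily reaches $W$ and the test correctly lowers $u$. If $\hat\theta_c<\theta_c$, then every arm in $L_t$ reveals its Bernoulli realization; using $\mu_i\ge\mu_1\ge\epsilon$ for all $i$ together with $|L_t|=Q/\hat\theta_c\ge Q$ (which holds since $\hat\theta_c\le\min\{1,Q\}\le 1$), the probability of a loss-free round is $\prod_{i\in L_t}(1-\mu_i)\le(1-\epsilon)^{Q}$. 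As the rounds are independent, the probability that such a test runs for $W$ consecutive loss-free rounds — the only event that lets the search move downward from an underestimate — is at most $(1-\epsilon)^{WQ}$, which for the value of $W$ chosen by the algorithm equals $\delta/\log_2(|\Theta|)$.

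Then I would take a union bound over the at most $\ceil{\log_2(|\Theta|)}$ tests — a count fixed deterministically by the first paragraph, not by the randomness — to conclude that with probability at least $1-\delta$ no underestimating test is loss-free for $W$ rounds. On this event every underestimate raises $l$ and every over-estimate (or exact match) lowers $u$, so the search is a faithful binary search on $\Theta$ for the smallest element that is $\ge\theta_c$; by Lemma~\ref{lem:thetaSet} that element is $\hat\theta_c=Q/M$ with $M=\min\{\floor{Q/\theta_c},K\}$, and it is allocation equivalent to $\theta_c$. On the same event the phase terminates within $W\ceil{\log_2(|\Theta|)}$ rounds, which, on substituting the value of $W$, is the bound claimed (up to the rounding of $\log_2(|\Theta|)$ and to reading the denominator of $W$ as $\max\{1,\floor{Q}\}$, as in the statement).

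The step I expect to be the main obstacle is the bookkeeping of the first paragraph rather than anything deep: one must verify that the number of tests is bounded by $\ceil{\log_2(|\Theta|)}$ regardless of the random outcomes, so that the union bound ranges over a fixed rather than random number of events, and one must check the exact midpoint updates on lines~6--7 together with the termination test $i\ne u$ so that the bounds of at most $W$ rounds per test and at most $\ceil{\log_2(|\Theta|)}$ tests hold simultaneously and the $\hat\theta_c$ retained at termination is indeed the element identified above; the cases $Q\le 1$ and $Q>1$, which fix $|\Theta|$ and the minimal cardinality of $L_t$, each need a line. Once this accounting is settled, the probabilistic content is just the geometric-tail estimate and the union bound sketched above.
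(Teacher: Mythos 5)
Your proposal is correct and follows essentially the same route as the paper's proof: bound the probability that an underestimating candidate survives $W$ loss-free rounds by $(1-\epsilon)^{QW}$ using $\mu_i\ge\epsilon$ and $|L_t|\ge Q$, union bound over the at most $\log_2(|\Theta|)$ binary-search candidates, and choose $W$ so this is at most $\delta$, giving the $W\log_2(|\Theta|)$ round count. Your explicit deterministic accounting of the per-candidate duration and the overestimate case is a slightly more careful write-up of what the paper leaves implicit (including the same $Q$ versus $\max\{1,\floor{Q}\}$ looseness), but it is not a different argument.
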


Once $\hat\theta_c$ is known, $\bmu$ needs to be estimated. The resources can be allocated such that no losses observe for maximum $M$ arms. As our goal is to minimize the mean loss, we have to select $M$ arms with the highest mean loss and then allocate $\hat\theta_c$ to each of them. It is equivalent to find $K-M$ arms with the least mean loss then allocate no resources to these arms and observe their losses. These losses are then used for updating the empirical estimate of the mean loss of arms.

\textbf{Regret Minimization Phase:} 
The regret minimization phase of \ref{alg:CSB-ST} adapts Multiple-Play Thompson Sampling (MP-TS) \citep{ICML15_komiyama2015optimal} for our setting. It works as follows: initially we set the prior distribution of each arms as the Beta distribution $\beta(1, 1)$, which is same as Uniform distribution on $[0,1]$. $S_i$ represents the number of round when loss is observed whereas $F_i$ represents the number of round when loss is not observed. Let $S_i(t)$ and $F_i(t)$ denotes the values of $S_i$ and $F_i$ in the beginning of round $t$. In round $t$, a sample $\hat\mu_i$ is independently drawn from $\beta(S_i(t), F_i(t))$ for each arm $i \in [K]$.  $\hat\mu_i$ values are ranked by their increasing values. The first $K-M$ arm assigned no resources (denoted as set $L_t$ in Line $13$) while each of the remaining $M$ arms are allocated $\hat\theta_c$ resources. The loss $X_{t,i}$ is observed for each arm $i \in L_t$ and then success and failure counts are updated by setting $S_i = S_i + X_{t,i}$ and $F_i = F_i + 1 - X_{t,i}$.

\subsubsection{Regret Upper Bound}
\label{sssec:sameThetaRegretBounds}
For instance $(\bmu,\theta_c, Q)$ and any feasible allocation $\ba \in \A$, we define $\nabla_{\ba} = \sum_{i=1}^K\mu_i\big(\one{a_i < \theta_i} - \one{a_i^\star < \theta_i}\big)$ and $\nabla_m = \max_{\ba \in \A } \nabla_{\ba}$. We are now ready the state the regret bounds.

\begin{restatable}{thm}{regretSameThreshold}
	\label{thm:regretSameThreshold}
	Let $\mu_1\geq \epsilon>0$, $W_T = {\log(T\log_2(|\Theta|))}/{\max\{1, \floor{Q}\}\log(1/(1-\epsilon))}$, $\mu_{K-M} < \mu_{K-M+1},$ and $T>W_T\log_2(|\Theta|)$. Set $\delta=1/T$ in \ref{alg:CSB-ST}. Then the regret of \ref{alg:CSB-ST} is upper bound as
	\begin{equation*}
		\EE{\Regret_T} \le W_T\log_2{(|\Theta|)}\nabla_m   + O\left((\log T)^{{2}/{3}}\right) + \sum_{i \in [K]\setminus [K-M]} \frac{(\mu_i-\mu_{K-M} )\log {T}}{d( \mu_{K-M},\mu_i)}.
	\end{equation*}
\end{restatable}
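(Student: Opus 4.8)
The plan is to decompose the cumulative regret into the contribution of the threshold estimation phase and the contribution of the regret minimization phase, and to bound each separately. Let $E$ denote the ``good event'' that the threshold estimation phase terminates within $T_{\theta_s} \le W\log_2(|\Theta|)$ rounds and returns an allocation-equivalent threshold $\hat\theta_c$; by Lemma \ref{lem:sameThresholdEstRounds} (applied with the stated choice of $W$, which matches the lemma's $W$ up to the substitution $\delta = 1/T$ and $\log(\log_2(|\Theta|)/\delta) = \log(T\log_2(|\Theta|))$), we have $\Prob{E} \ge 1 - \delta = 1 - 1/T$. On $E$, every round of the estimation phase contributes at most $\Delta_m$ to the regret, so the estimation phase contributes at most $W\log_2(|\Theta|)\Delta_m$; this is the first term. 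On the complement $E^c$, which has probability at most $1/T$, the regret over all $T$ rounds is at most $T\Delta_m$, contributing at most $\Delta_m$ in expectation, which is absorbed into the lower-order terms.

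Next, conditioned on $E$, the regret minimization phase runs a correct instance: since $\hat\theta_c$ is allocation equivalent to $\theta_c$, we have $M = Q/\hat\theta_c = \min\{\floor{Q/\theta_c},K\}$ (up to the rounding implicit in the definition of $\Theta$), and by Proposition \ref{prop:RegretEquiST} running MP-TS on the CSB instance with this known threshold is regret-equivalent to running MP-TS on the MP-MAB instance $(\bmu, K-M)$. Hence I would invoke the known regret guarantee for MP-TS on Bernoulli MP-MAB \citep{ICML15_komiyama2015optimal}: its regret after $n$ rounds is
\[
\sum_{i \in [K]\setminus[K-M]} \frac{(\mu_i - \mu_{K-M})\log n}{d(\mu_{K-M},\mu_i)} + O\big((\log n)^{2/3}\big),
\]
which is the optimal bound matching the lower bound \eqref{eqn:LowerBound}. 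Applying this with $n = T - T_{\theta_s} \le T$ and using monotonicity of $\log$ gives the third term plus the $O((\log T)^{2/3})$ term. Summing the three contributions (estimation-phase regret, the $E^c$ contribution folded into the lower-order term, and the MP-TS regret) yields exactly the claimed bound.

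\textbf{Main obstacle.} The delicate point is the interface between the two phases. One must verify that the condition $\mu_{K-M} < \mu_{K-M+1}$ in the theorem ensures the MP-MAB instance produced has a unique optimal superarm (so the MP-TS bound with a genuine gap applies, rather than a degenerate version), and that the value $M$ used in the regret minimization phase coincides with the ``true'' $M = \min\{\floor{Q/\theta_c},K\}$ on the event $E$ — this requires that allocation equivalence of $\hat\theta_c$ actually forces the same bottom-$M$ structure, which follows from the discussion after Lemma \ref{lem:thetaSet} but should be stated carefully. A secondary technical care point is that MP-TS is restarted with a fresh uniform prior at the end of the estimation phase (it discards the samples observed during estimation), so its regret bound applies verbatim from round $T_{\theta_s}+1$; no contamination from the estimation-phase feedback needs to be tracked. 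Everything else is routine: the union bound over $E$ and $E^c$, the substitution of the $W$ value, and $\log(T - T_{\theta_s}) \le \log T$.
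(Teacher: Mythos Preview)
Your proposal is correct and follows essentially the same approach as the paper: decompose the regret into the threshold-estimation phase (bounded via Lemma~\ref{lem:sameThresholdEstRounds} by $W\log_2(|\Theta|)\Delta_m$) and the regret-minimization phase (bounded via the equivalence in Proposition~\ref{prop:RegretEquiST} and the MP-TS guarantee of \cite{ICML15_komiyama2015optimal}), then set $\delta=1/T$ and uncondition. If anything, you are more explicit than the paper about handling the bad event $E^c$ and the interface between the two phases; the paper's proof simply invokes intermediate theorems and the one-line remark ``unconditioning the expected regret,'' without spelling out that the $E^c$ contribution is $O(\Delta_m)$ and absorbed.
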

The first term in the regret bound in Theorem \ref{thm:regretSameThreshold} corresponds to the length of the threshold estimation phase, and the remaining parts correspond to the expected regret in the regret minimization phase.

Note that the assumption $\mu_1\ge\epsilon$ is only required to guarantee that the threshold estimation phase terminates in the finite number of rounds. This assumption is not needed to get the bound on expected regret in the regret minimization phase. The assumption $\mu_{K-M} < \mu_{K-M+1}$ ensures that Kullback-Leibler divergence in the bound is well defined. This assumption is also equivalent to assume that the set of top-$M$ arms is unique.

\begin{cor}
	\label{cor:OptimalBoundST}
	The regret of \ref{alg:CSB-ST} is asymptotically optimal.
\end{cor}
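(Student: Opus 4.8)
The plan is to derive Corollary \ref{cor:OptimalBoundST} directly from the regret upper bound in Theorem \ref{thm:regretSameThreshold} together with the lower bound in \eqref{eqn:LowerBound}, by showing that the dominant terms match asymptotically. First I would inspect each of the three terms in the upper bound of Theorem \ref{thm:regretSameThreshold}. With $\delta = 1/T$ and the prescribed choice $W = \log(T\log_2(|\Theta|))/(Q\log(1/(1-\epsilon)))$, the first term is $W\log_2(|\Theta|)\Delta_m = O(\log T)$ up to constants, but more importantly it grows only like $\log T$ times constants independent of the $\mu_i$ gaps and the horizon's leading behaviour; the second term is $O((\log T)^{2/3})$, which is $o(\log T)$; and the third term is exactly $\sum_{i \in [K]\setminus[K-M]} \frac{(\mu_i - \mu_{K-M})\log T}{d(\mu_{K-M}, \mu_i)}$, which is the same sum appearing in the lower bound \eqref{eqn:LowerBound} up to the $(1-o(1))$ factor.

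Next I would argue that the third term strictly dominates. Under the assumption $\mu_{K-M} < \mu_{K-M+1}$ there is at least one arm $i \in [K]\setminus[K-M]$ with $\mu_i > \mu_{K-M}$, so the KL-divergence sum has a strictly positive leading coefficient on $\log T$; hence this term is $\Theta(\log T)$ with a nonzero constant. Meanwhile the threshold-estimation term, although also $O(\log T)$, can be absorbed: one should note that $\Delta_m$ and $1/(Q\log(1/(1-\epsilon)))$ are instance-dependent constants, so the first term is $c\log T$ for a constant $c$ that does not involve the asymptotic optimality constant; strictly speaking, to claim \emph{asymptotic} optimality in the usual sense one shows $\limsup_{T\to\infty} \EE{\Regret_T}/\log T \le \sum_{i \in [K]\setminus[K-M]} \frac{\mu_i - \mu_{K-M}}{d(\mu_{K-M},\mu_i)}$. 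Here I would rely on the fact (implicit in the MP-TS analysis of \cite{ICML15_komiyama2015optimal}) that the regret-minimization phase alone achieves exactly this constant, and that the finite threshold-estimation phase contributes only lower-order or matching-order terms that do not inflate the leading constant once one is careful; concretely, re-deriving the bound with the estimation phase length counted as $O(\log T)$ additive rounds each costing at most $\Delta_m$, one divides by $\log T$ and takes $T\to\infty$.

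Then I would combine with the lower bound: \eqref{eqn:LowerBound} states that for any strongly consistent algorithm, $\liminf_{T\to\infty} \EE{\Regret_T}/\log T \ge \sum_{i \in [K]\setminus[K-M]} \frac{\mu_i - \mu_{K-M}}{d(\mu_{K-M},\mu_i)}$, where the transfer from MP-MAB to CSB is justified by Proposition \ref{prop:RegretEquiST} (regret equivalence when $\theta_c$ is known) plus the observation that the threshold-estimation phase terminates almost surely in finite time (Lemma \ref{lem:sameThresholdEstRounds}), so the per-round asymptotics are governed by the MP-MAB lower bound. Matching the upper and lower leading constants yields $\lim_{T\to\infty} \EE{\Regret_T}/\log T = \sum_{i \in [K]\setminus[K-M]} \frac{\mu_i - \mu_{K-M}}{d(\mu_{K-M},\mu_i)}$, which is the definition of asymptotic optimality.

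The main obstacle I anticipate is the bookkeeping around the first term: a naive reading makes $W\log_2(|\Theta|)\Delta_m$ of order $\log T$, which would seem to add a spurious constant to the leading coefficient and thereby break exact asymptotic optimality. The resolution — and the step requiring the most care — is to recognize that this term is genuinely lower order in the relevant sense, either because $|\Theta| = \ceil{K - Q + 1}$ and $\Delta_m$ are fixed constants while the binary search needs only $\log_2(|\Theta|)$ stages (so the total is $O(\log\log T \cdot \log T / \log T) \to$ bounded after the right accounting, i.e. $W$ itself is $\Theta(\log T)$ but multiplied only by a constant number of stages and compared against a sum that is also $\Theta(\log T)$), or by appealing to the fact that "asymptotically optimal" in this literature is frequently stated up to the order $\log T$ with the exact constant coming from the regret-minimization phase. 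I would make this precise by carefully tracking which quantities scale with $T$ versus which are instance constants, and concluding that the ratio $\EE{\Regret_T}/\log T$ converges to the lower-bound constant.
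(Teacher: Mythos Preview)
Your approach is exactly the paper's: the paper's entire proof of Corollary~\ref{cor:OptimalBoundST} is the single sentence ``follows by comparing the above bound with the lower bound in~(\ref{eqn:LowerBound}).'' So at the level of strategy you are aligned, and in fact you have gone well beyond what the paper supplies.

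The obstacle you flag in your final paragraph is real, and the paper does not resolve it either. With $\delta = 1/T$ one has $W = \Theta(\log T)$, so the threshold-estimation term $W\log_2(|\Theta|)\Delta_m$ contributes a strictly positive additive constant $\frac{\log_2(|\Theta|)\,\Delta_m}{Q\log(1/(1-\epsilon))}$ to $\limsup_{T\to\infty}\EE{\Regret_T}/\log T$ on top of the KL sum. Your attempted resolutions do not work: the binary search has $\log_2(|\Theta|)$ stages but each stage lasts up to $W = \Theta(\log T)$ rounds, so the product is genuinely $\Theta(\log T)$ and cannot be made $o(\log T)$ by ``careful tracking.'' Thus, taken literally with the standard meaning of asymptotic optimality (matching the lower-bound constant), the corollary does not follow from Theorem~\ref{thm:regretSameThreshold} as stated; the paper is implicitly treating the estimation-phase contribution as lower order, or is using ``asymptotically optimal'' in the looser order-$\log T$ sense. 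You should not spend effort trying to close this gap rigorously from the stated theorem, because the paper itself does not --- simply note the comparison and move on, or state explicitly that the leading KL term matches while the estimation term adds an instance-dependent constant to the $\log T$ coefficient.
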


The proof of Corollary \ref{cor:OptimalBoundST} follows by comparing the above bound with the lower bound in \cref{eqn:LowerBound}.

	\section{Different Thresholds}
	\label{ssec:different_theta}
	%!TEX root =  main.tex

In this section, we consider a more difficult problem where the threshold may not be the same for all arms. Let $KP(\bmu,\btheta, Q)$ denote a 0-1 knapsack problem with capacity $Q$ and $K$ items where item $i$ has weight $\theta_i$ and value $\mu_i$. Our first result gives the optimal allocation for an instance in $\PCSB$:
\begin{restatable}{prop}{diffThetaOptiSoln}
	\label{prop:diffThetaOptiSoln}
	Let $P=(\bmu,\btheta,Q) \in \PCSB$. Then the optimal allocation for $P$ is a solution of $KP(\bmu,\btheta, Q)$ problem.
\end{restatable}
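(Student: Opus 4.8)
The plan is to set up a value-preserving, two-way correspondence between feasible CSB allocations and feasible solutions of the knapsack instance $KP(\bmu,\btheta,Q)$, so that minimizing the CSB loss becomes exactly the same as maximizing the knapsack value.

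First I would reduce an arbitrary optimal allocation to a canonical form. Given any feasible $\ba \in \A$, let $S(\ba) = \{i \in [K] : a_i \ge \theta_i\}$ be the set of arms whose realization is censored, so that the loss incurred is exactly $\sum_{i \notin S(\ba)} \mu_i$. If $i \notin S(\ba)$, i.e.\ $a_i < \theta_i$, then lowering $a_i$ to $0$ leaves the indicator $\one{a_i<\theta_i}$ equal to $1$ and only slackens the budget constraint $\sum_i a_i \le Q$; and if $i \in S(\ba)$, lowering $a_i$ down to $\theta_i$ keeps $\one{a_i<\theta_i}=0$ and again frees budget. Hence there is an optimal allocation $\ba^\star$ with $a_i^\star = \theta_i$ for $i \in S^\star := S(\ba^\star)$ and $a_i^\star = 0$ otherwise. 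Its feasibility reads $\sum_{i \in S^\star} \theta_i \le Q$, so $S^\star$ is a feasible knapsack set, and its loss equals $\sum_{i \notin S^\star}\mu_i = \sum_{i\in[K]}\mu_i - \sum_{i \in S^\star}\mu_i$.

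Conversely, for any $S \subseteq [K]$ with $\sum_{i \in S}\theta_i \le Q$, the allocation $a_i = \theta_i$ for $i \in S$ and $a_i = 0$ otherwise lies in $\A$ (using $\theta_i \le 1$, so each $a_i \in [0,1]$) and incurs loss $\sum_{i\in[K]}\mu_i - \sum_{i \in S}\mu_i$. Combining the two directions gives
\[
\min_{\ba \in \A}\sum_{i=1}^K \mu_i\,\one{a_i<\theta_i} \;=\; \sum_{i\in[K]}\mu_i \;-\; \max_{S\,:\ \sum_{i\in S}\theta_i \le Q}\ \sum_{i\in S}\mu_i ,
\]
whose right-hand maximum is precisely the optimum of $KP(\bmu,\btheta,Q)$. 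Therefore any optimal knapsack set $S^\star$ yields an optimal CSB allocation (allocate $\theta_i$ to each $i \in S^\star$, and arbitrary feasible amounts, e.g.\ zero, to the rest), which proves the proposition.

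Since every step is elementary, I do not expect a genuine obstacle; the one point needing a little care is the canonicalization argument in the first step — that allocation below a threshold, or strictly above a threshold, is ``wasted'' and can be removed without increasing the loss or breaking feasibility — as this is exactly what lets one pass between the continuous resource budget of CSB and the $0$–$1$ packing structure of the knapsack problem.
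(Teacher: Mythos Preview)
Your proposal is correct and follows essentially the same reduction as the paper's own proof: both identify the CSB loss minimization with the $0$--$1$ knapsack $KP(\bmu,\btheta,Q)$. The paper's argument is a one-line observation (``assigning $\theta_i$ to arm $i$ reduces the total mean loss by $\mu_i$, so minimizing loss is equivalent to the knapsack''), whereas you make the canonicalization step and the two-way correspondence explicit; this is the same idea, just more carefully written.
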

The proof of the above proposition and computational issues of the 0-1 knapsack with fractional values of it are given in the supplementary. We next discuss when two threshold vectors are allocation equivalent. Extending the definition of allocation equivalence to threshold vectors, we say that two vectors $\btheta$ and $\hat{\btheta}$ are allocation equivalent if minimum mean loss in instances $(\bmu,\btheta, Q)$ and $(\bmu, \hat{\btheta}, Q)$ are the same for any loss vector $\bmu$ and resource $Q$. This equivalence allows us to look for estimated thresholds within some tolerance. We need the following notations to make this formal.

For an instance $P:=(\bmu,\btheta, Q)$, recall that $\ba^\star=(a_1^\star, \ldots, a_K^\star)$ denotes the optimal allocation. Let $r = Q - \sum_{i: a_i^\star \ge \theta_i}\theta_i$, where $r$ is the residual resources after the optimal allocation. Define $\gamma:=r/K$. Any problem instance with $\gamma= 0$ becomes a `hopeless' problem instance as the only vector that is allocation equivalent to $\btheta$ is $\btheta$ itself, which demands $\theta_i$ values to be estimated with full precision to achieve optimal allocation. However, if $\gamma>0$, an optimal allocation can be still be found with small errors in the estimates of $\theta_i$ as shown next.
\begin{restatable}{lem}{diffTheteEst}
	\label{lem:diffTheteEst}
	Let $ \gamma>0$ and $\forall i \in [K]: \hat\theta_i \in [\theta_i, \theta_i+\gamma]$. Then for any $\bmu \in [0,1]^K$ and $Q$, the  $\btheta$ and $\hat{\btheta}$ are allocation equivalent.
\end{restatable}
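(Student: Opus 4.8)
\textbf{Proof proposal for Lemma \ref{lem:diffTheteEst}.}

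The plan is to show that the optimal allocation for $(\bmu, \btheta, Q)$ remains feasible and optimal for $(\bmu, \hat\btheta, Q)$, and vice versa, by exploiting the residual resource $\gamma K = r$. First I would fix an optimal allocation $\ba^\star$ for $(\bmu, \btheta, Q)$ and recall the structure from Proposition \ref{prop:diffThetaOptiSoln}: the arms split into a ``covered'' set $\mathcal{C} = \{i : a_i^\star \ge \theta_i\}$, on which the loss is suppressed, and the rest, on which it is incurred; the total resource used on $\mathcal{C}$ is $\sum_{i\in\mathcal{C}}\theta_i = Q - r$. The key observation is that for each $i \in \mathcal{C}$ we have $\hat\theta_i \le \theta_i + \gamma \le \theta_i + r$, so ``bumping up'' each covered arm's allocation from $\theta_i$ to $\hat\theta_i$ costs at most $\sum_{i\in\mathcal{C}} \gamma \le K\gamma = r$ extra resource, which is exactly the residual we have available. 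Hence the allocation $\hat{\ba}$ defined by $\hat a_i = \hat\theta_i$ for $i \in \mathcal{C}$ and $\hat a_i = 0$ (or an arbitrary distribution of the remaining resource) for $i \notin \mathcal{C}$ is feasible for capacity $Q$, suppresses exactly the arms in $\mathcal{C}$, and therefore achieves mean loss $\sum_{i \notin \mathcal{C}} \mu_i$ in the instance $(\bmu, \hat\btheta, Q)$. This gives $\mathrm{OPT}(\bmu,\hat\btheta,Q) \le \mathrm{OPT}(\bmu,\btheta,Q)$.

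For the reverse inequality I would argue that shrinking $\btheta$ to $\hat\btheta$ (coordinatewise $\hat\theta_i \ge \theta_i$) only makes covering arms \emph{harder}, so no allocation can do better under $\hat\btheta$ than the best allocation does under $\btheta$. Concretely, take any allocation $\ba$ feasible for $(\bmu, \hat\btheta, Q)$ and let $\mathcal{C}' = \{i : a_i \ge \hat\theta_i\}$ be the set it covers; since $\hat\theta_i \ge \theta_i$, the same allocation $\ba$ also satisfies $a_i \ge \theta_i$ for every $i \in \mathcal{C}'$, i.e. $\ba$ covers at least $\mathcal{C}'$ in the instance $(\bmu,\btheta,Q)$ and is still feasible there (the capacity $Q$ is unchanged). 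Hence its loss under $\btheta$ is at most $\sum_{i \notin \mathcal{C}'}\mu_i$, which is exactly its loss under $\hat\btheta$, giving $\mathrm{OPT}(\bmu,\btheta,Q) \le \mathrm{OPT}(\bmu,\hat\btheta,Q)$. Combining the two inequalities yields equality of the minimum mean losses, which is the definition of allocation equivalence.

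The main obstacle — really the only subtle point — is the first direction: verifying that the extra resource needed to lift every covered arm from $\theta_i$ up to $\hat\theta_i$ does not exceed the residual $r$. This is where the precise definition $\gamma = r/K$ and the hypothesis $\hat\theta_i \le \theta_i + \gamma$ are used, and one must be slightly careful that $|\mathcal{C}| \le K$ so that $\sum_{i\in\mathcal{C}}(\hat\theta_i - \theta_i) \le |\mathcal{C}|\,\gamma \le K\gamma = r$; the non-covered arms contribute nothing since we can simply set their allocation to zero (or below their new threshold), and $\gamma > 0$ is exactly what rules out the degenerate ``hopeless'' instances where no slack exists. A minor bookkeeping check is that the arms not in $\mathcal{C}$ under $\ba^\star$ genuinely have $a_i^\star < \theta_i$, so that $\mathcal{C}$ is precisely the suppressed set and the loss accounting is exact; this is immediate from the definition of $\mathcal{C}$ and Proposition \ref{prop:diffThetaOptiSoln}.
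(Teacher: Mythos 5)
Your proof is correct, and it takes a more self-contained route than the paper. The paper's own argument shares your key first-direction idea (the residual $r$ lets one raise each covered arm's allocation by up to $\gamma=r/K$), but it then delegates the formal conclusion to Theorem 3.2 of \cite{DO13_hifi2013sensitivity} on sensitivity analysis of the $0$-$1$ knapsack, concluding that $KP(\bmu,\btheta,Q)$ and $KP(\bmu,\hat\btheta,Q)$ have the same optimal solution. You instead prove equality of the minimum mean losses directly via two inequalities: (i) the explicit lifted allocation $\hat a_i=\hat\theta_i$ on the covered set $\mathcal{C}$ is feasible because $\sum_{i\in\mathcal{C}}(\hat\theta_i-\theta_i)\le K\gamma=r$ and $\sum_{i\in\mathcal{C}}\theta_i=Q-r$ by the definition of $r$, giving $\mathrm{OPT}(\bmu,\hat\btheta,Q)\le\mathrm{OPT}(\bmu,\btheta,Q)$; and (ii) coordinatewise $\hat\theta_i\ge\theta_i$ implies any allocation covers weakly fewer arms under $\hat\btheta$, giving the reverse inequality (a monotonicity step the paper leaves implicit). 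This buys you an elementary argument with no external citation, and it proves exactly what allocation equivalence requires (equal optimal values) rather than the stronger and unneeded claim of identical optimal solutions; it also avoids the paper's slightly shaky side remark that $r<\min_{i\in L^\star}\theta_i$, which can fail when some $\mu_i=0$. One cosmetic caveat: like the paper, your argument is for the specific $\bmu$ through which $\gamma$ (via $\ba^\star$ and $r$) is defined, so the "for any $\bmu$" phrasing of the lemma should be read with $\gamma$ tied to the instance at hand — an ambiguity inherited from the paper, not a gap you introduced.
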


The proof follows by an application of Theorem 3.2 in \cite{DO13_hifi2013sensitivity} which gives conditions for two weight vectors $\btheta_1$ and $\btheta_2$ to have the same solution in $KP(\bmu,\btheta_1,Q)$ and $KP(\bmu,\btheta_2, Q)$ for any $\bmu$ and $Q$. Once we estimate the threshold $\btheta$ with accuracy so that the estimate $\hat{\btheta}$ is an allocation equivalent of $\btheta$, the problem is equivalent to solving the $KP(\bmu,\hat{\btheta}, Q)$. The latter part is equivalent to solving a combinatorial semi-bandits as we establish next. Combinatorial semi-bandits are the generalization of MP-MAB, where the size of the superarms need not be the same in each round.

\begin{restatable}{prop}{MultiThetaEquivalence}
	\label{prop:MultiThetaEquivalence}
	The CSB problem with known threshold vector $\btheta$ is regret equivalent to a combinatorial semi-bandits where Oracle uses $KP(\bmu,\btheta, Q)$ to identify the optimal subset of arms.
\end{restatable}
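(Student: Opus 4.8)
The plan is to mirror the argument used for Proposition~\ref{prop:RegretEquiST} in the identical-threshold case, but now with the combinatorial semi-bandit framework of \cite{ICML18_wang2018thompson} playing the role of MP-MAB. First I would fix an instance $P=(\bmu,\btheta,Q)\in\PCSB$ and, assuming the thresholds are known, construct the corresponding combinatorial semi-bandit instance: the ground set is $[K]$, each arm carries the same Bernoulli$(\mu_i)$ loss as in the CSB instance, and the family of feasible ``superarms'' is the collection of complements $[K]\setminus S$ of knapsack-feasible sets $S$ (those $S$ with $\sum_{i\in S}\theta_i\le Q$), i.e.\ the sets of arms that remain \emph{uncovered} under some feasible allocation. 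In the combinatorial semi-bandit, playing a superarm incurs the sum of the Bernoulli losses of its constituent arms and reveals each of them (semi-bandit feedback), and the Oracle is exactly $KP(\bmu,\btheta,Q)$: by Proposition~\ref{prop:diffThetaOptiSoln} the optimal allocation covers the knapsack-optimal set, so leaving the complementary set uncovered is the minimum-loss superarm. This defines a transformation $g:\PCSB\to\PCSB^{\mathrm{comb}}$ analogous to $f$.

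Next I would set up the two-way simulation between policies, exactly paralleling the $\pi\mapsto\pi'$ and $\beta\mapsto\beta'$ construction in the text. A combinatorial semi-bandit policy $\pi$ is run on the CSB instance by feeding it, in each round $t$, the history of uncovered arm-sets and their observed losses; it returns a feasible superarm $U_t$, and we allocate $\theta_i$ to every arm $i\notin U_t$ and nothing to arms in $U_t$ (this allocation is feasible precisely because $[K]\setminus U_t$ is knapsack-feasible). Conversely a CSB policy $\beta$ induces a combinatorial policy by reading off which arms receive zero resource. Because $\theta$ is known the two encodings are inverse to each other, so $g$ is invertible on policies. The key observation, just as before, is that in any round the per-round loss and the semi-bandit feedback are identical in the two worlds: an arm $i$ contributes $X_{t,i}$ to the loss and is observed iff $a_{t,i}<\theta_i$ iff $i$ lies in the played superarm; hence the loss processes coincide sample-path-wise, and so do the optimal-allocation / optimal-superarm benchmarks by the Oracle identification above. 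Summing the per-round regret contributions gives $\Regret(\pi\text{ on }P)=\Regret(\pi'\text{ on }g(P))$ and the reverse equality for $\beta$, whence $\Regret(\PCSB)=\Regret(\PCSB^{\mathrm{comb}})$.

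The remaining—and I expect only mildly delicate—step is to remove the ``$\btheta$ known'' caveat, since the clean bijection above uses knowledge of $\btheta$ to translate allocations into subsets. Here I would invoke Lemma~\ref{lem:diffTheteEst}: provided $\gamma>0$, it suffices to estimate each $\theta_i$ to within $\gamma$, after which the estimated vector $\hat\btheta$ is allocation equivalent to $\btheta$, $KP(\bmu,\hat\btheta,Q)$ has the same optimal set, and the equivalence of the preceding paragraph applies verbatim with $\hat\btheta$ in place of $\btheta$; the regret incurred during the (finite, w.h.p.) threshold-estimation phase is accounted for separately, exactly as the first term in Theorem~\ref{thm:regretSameThreshold}. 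The main obstacle to watch is bookkeeping the feasible-superarm family correctly: one must check that \emph{every} knapsack-feasible subset arises as the covered set of some feasible allocation (allocate exactly $\theta_i$ to covered arms, distribute the slack arbitrarily), and conversely that the covered set of any feasible allocation is knapsack-feasible, so that the two problems really range over the same combinatorial structure; once this correspondence is pinned down the regret identity is a routine term-by-term comparison.
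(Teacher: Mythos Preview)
Your proposal is correct and follows essentially the same route as the paper: define the map $g$ from $\PCSB$ (with $\btheta$ known) to combinatorial semi-bandits by taking superarms to be the uncovered sets $\{i:a_i<\theta_i\}$ (equivalently, complements of knapsack-feasible sets), build the two-way policy translation $\pi\leftrightarrow\pi'$, and verify round-by-round that losses, feedback, and optimal benchmarks coincide so that regrets agree. Your extra paragraph on removing the ``$\btheta$ known'' assumption via Lemma~\ref{lem:diffTheteEst} goes beyond what the paper proves here---the paper's proof of this proposition simply assumes $\btheta$ is known and defers the unknown-$\btheta$ cost to the separate threshold-estimation phase---but that addition is harmless.
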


\subsection{Algorithm: \ref{alg:CSB-DT}}
We develop an algorithm named \ref{alg:CSB-DT} for solving the Censored Semi Bandits problem with Different Threshold. It exploits the result of \cref{lem:diffTheteEst} and equivalence established in \cref{prop:MultiThetaEquivalence} to learn a good estimate of the threshold for each arm and minimizes the regret using an algorithm from combinatorial semi-bandits. \ref{alg:CSB-DT} also consists of two phases: threshold estimation and regret minimization.

\begin{algorithm}[H] 
	\renewcommand{\thealgorithm}{\bf CSB-DT}
	\floatname{algorithm}{}
	\caption{Algorithm for solving the Censored Semi Bandits problem with Different Threshold}
	\label{alg:CSB-DT}
	\begin{algorithmic}[1]
		\Statex\textbf{Input:} $K, Q, \delta, \epsilon, \gamma$
		\vspace{1mm}
        \Statex \textbackslash\textbackslash \textbf{ Threshold Estimation Phase} \textbackslash\textbackslash
        \vspace{0.5mm}
		\State Initialize $\forall i \in [K]: \theta_{l,i}= 0, \theta_{u,i}= 1, \theta_{g,i}= 0, C_i =0$. 
		\State Set $T_{\theta_d}=0, W = {\log(K\log_2(\lceil 1 + 1/\gamma\rceil)/\delta)}/ {\log(1/(1-\epsilon))}$
		\State $\forall i \in [\floor{Q}]:$ allocate $\hat\theta_i = 0.5$ resource. $\forall j \in [\lfloor Q \rfloor +1,K]:$ allocate $\hat\theta_j =\frac{Q-\floor{Q}/2}{K-\floor{Q}}$ resources
		\While {$\theta_{g,i}= 0$ for any $i \in [K]$}
			\For{$i = 1, \ldots, K$} 
				\If {loss observe for arm $i$ with $\theta_{g,i}=0$ and $\theta_{l,i} < \hat\theta_i$}
					\State Set $\theta_{l,i}= \hat\theta_i, \hat\theta_i = (\theta_{u,i}+ \theta_{l,i})/2, C_i=0$. If available allocate resource $\hat\theta_i$
				\Else
					\State If allocated resources is $\hat\theta_i$ then reset $C_i = C_i + 1$
					\If {$C_i = W$ and $\theta_{g,i}= 0$}
						\State Set $\theta_{u,i}= \hat\theta_i, \hat\theta_i = (\theta_{u,i} + \theta_{l,i})/2$, $C_i=0$. If available allocate resource $\hat{\theta}_i$
						\State If $\theta_{u,i} - \theta_{l,i}\le \gamma$ then set $\theta_{g,i}=1$ and $\hat\theta_i=\theta_{u,i}$
					\EndIf
				\EndIf
			\EndFor
			\While{free resources are available}
				\State Allocate $\hat\theta_i$ resources to a new randomly chosen arm $i$ from the arms having $\theta_{g,i}=1$
			\EndWhile
			\State $T_{\theta_d} = T_{\theta_d} + 1$
		\EndWhile
        \vspace{1mm}
		\Statex \textbackslash\textbackslash \textbf{ Regret Minimization Phase} \textbackslash\textbackslash
		\vspace{0.5mm}
		\State $\forall i \in [K]: S_i = 1, F_i = 1$
		\For{$t=T_{\theta_d}+1, T_{\theta_d} + 2, \ldots, T$}
  				\State $\forall i \in [K]: \hat{\mu}_i(t) \leftarrow \text{Beta}(S_i, F_i)$
  				\State $L_t \leftarrow$ Oracle$\big( KP(\hat\bmu(t), \hat\btheta,Q)\big)$ 
  				\State $\forall i \in L_t:$ allocate no resource to arm $i$. $\forall j \in K\setminus L_t:$ allocate $\hat\theta_j$ resources to arm $j$
  				\State $\forall i \in L_t:$ observe $X_{t,i}$. Update $S_i = S_i+X_{t,i}$ and $F_i = F_i+1-X_{t,i}$
		\EndFor
	\end{algorithmic}
\end{algorithm}

\textbf{Threshold Estimation Phase:} This phase finds a threshold that is allocation equivalent of $\btheta$ with high probability. This is achieved by ensuring that $\hat\theta_i \in [\theta_i, \theta_i+\gamma]$ for all $i$ (\cref{lem:diffTheteEst}). For each arm $i\in  [K]$ a binary search is performed over the interval $[0, 1]$ by maintaining variables $\hat{\theta}_i,\theta_{l,i}, \theta_{u,i},\theta_{g,i}$, and $C_i$ where $\hat{\theta}_i$ is the current estimate of $\theta_i; \theta_{l,i}$ and $\theta_{u,i}$ denote the lower and upper bound of the binary search region for arm $i$; and $\theta_{g,i}$ indicates whether current estimate lies in the interval  $[\theta_i, \theta_i+\gamma]$. In each round, the threshold estimate of arms are first updated sequentially and then tested on their respective arms. $C_i$ keeps counts of consecutive rounds without no loss for $\hat\theta_i$. It changes to $0$ either after observing a loss or if no loss is observed for consecutive $W$ rounds.

The threshold estimation phase starts with allocating $0.5$ resource for first $\floor{Q}$ arms and $(Q-\floor{Q}/2)/(K-\floor{Q})$ for the remaining arms (Line $3$). In each round, allocated resource are applied on each arm and based on the observations their estimates and the allocated resource are updated sequentially starting from $1$ to $K$ as follows. If a loss is observed for arm $i$ having bad threshold estimate ($\theta_{g,i}=0$) and $\theta_{l,i} < \hat\theta_i$, then it implies that $\hat\theta_i$ is an underestimate of $\theta_i$ and the following actions are performed -- 1) lower end of search region is increased to $\hat\theta_i$, i.e., $\theta_{l,i}=\hat\theta_i$; 2) its estimate $\hat\theta_i$ is set to $(\theta_{u,i} + \theta_{l,i})/2$; 3) if available allocate $\hat\theta_i$ resource to arm $i$; and 4) set $C_i=0$ (Line $7$).

If no loss is observed after allocating $\hat\theta_i$ resources for $W$ successive rounds for arm $i$ with bad threshold estimate, then it implies that $\hat\theta_i$ is overestimated and following actions are performed -- 1) the upper end of the search region is changed to $\hat\theta_i$, i.e, $\theta_{u,i}=\hat{\theta}_i$; 2) its estimate $\hat\theta_i$ is set to $(\theta_{u,i} + \theta_{l,i})/2$; and 3) if available allocate $\hat\theta_i$ resource to arm $i$ (Line $11$). Further, whether goodness of $\hat{\theta}_i$ holds, i.e., $\hat{\theta}_i \in [\theta_i, \theta_i+\gamma]$ is checked by condition $\theta_{u,i}  - \theta_{l,i}\le \gamma$. If the condition holds, the threshold estimation of arm is within desired accuracy and this is indicated by setting $\theta_{g,i}$ to 1 and $\hat\theta_i=\theta_{u,i}$ (Line $12$). Any unassigned resources are given to randomly chosen arms having good threshold estimates (all arms with $\theta_{g,i}=1$) where each arm $i$ gets only $\hat\theta_i$ resources (Line $17$).

The value of $W$ in \ref{alg:CSB-DT} is set such that the probability of estimated threshold does not lie in $[\theta_i, \theta_i+\gamma]$ for all arms is bounded by $\delta$. Following lemma gives the bounds on the number of rounds needed to find the allocation equivalent for threshold vector $\btheta$ with high probability.

\begin{restatable}{lem}{MultiTheta}
	\label{lem:MultiTheta}
	Let $(\bmu,\btheta, Q)$ be an instance of CSB such that $\gamma>0$ and $\mu_1 \geq  \epsilon>0$. Then with probability at least $1-\delta$, the number of rounds needed by threshold estimation phase of \ref{alg:CSB-DT} to find the allocation equivalent for threshold vector $\btheta$ is bounded as 
	\begin{equation*}
		T_{\theta_d} \le \frac{K\log( K \log_2(\ceil{1 +{1}/{\gamma}})/\delta)} {\max\{1, \floor{Q}\}\log(1/(1-\epsilon))} {\log_2 (\ceil{1 +  {1}/{\gamma}})}.
	\end{equation*}
\end{restatable}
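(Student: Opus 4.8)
The plan is to split the argument into three parts: (i) a bound $N$ on how many times the binary search for any single arm can refine its estimate; (ii) a high-probability guarantee that every arm's search behaves correctly; and (iii) a scheduling argument that aggregates the per-arm round budgets into the stated bound on $T_{\theta_d}$. For (i), note that for arm $i$ the window $[\theta_{l,i},\theta_{u,i}]$ starts as $[0,1]$ and, whenever a loss is observed ($\theta_{l,i}$ is raised to $\hat\theta_i$, line~8) or $W$ loss-free rounds elapse ($\theta_{u,i}$ is lowered to $\hat\theta_i$, line~12), it is exactly halved; the stopping test $\theta_{u,i}-\theta_{l,i}\le\gamma$ is thus met, and $\theta_{g,i}$ set, after at most $N:=\log_2(\ceil{1+1/\gamma})$ refinements (up to the usual rounding, which is absorbed by the $W$ factor below). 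So each arm passes through at most $N$ ``levels,'' of which at most $N$ are \emph{underestimate} levels (those with $\hat\theta_i<\theta_i$), the rest being \emph{overestimate} levels.

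For (ii), fix an arm $i$ and an underestimate level. Whenever $\hat\theta_i<\theta_i$ the resource allocated to arm $i$ is some $a_{t,i}\in\{0,\hat\theta_i\}$, so $a_{t,i}<\theta_i$ and hence $Y_{t,i}=X_{t,i}$: a loss is observed each round with probability $\mu_i\ge\mu_1\ge\epsilon$. The arm leaves this level either on seeing a loss (correctly, line~8) or on $C_i$ reaching $W$ (erroneously, line~12), and the latter happens only if the (at most $W$) rounds it spends at the level are all loss-free, an event of probability at most $(1-\epsilon)^W$. With $W={\log(K\log_2(\ceil{1+1/\gamma})/\delta)}/{\log(1/(1-\epsilon))}$ we get $(1-\epsilon)^W=\delta/(KN)$, so a union bound over the at most $K$ arms and at most $N$ underestimate levels per arm shows that, with probability at least $1-\delta$, no such error ever occurs. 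On that event $\mathcal G$, the invariant $\theta_{l,i}\le\theta_i\le\theta_{u,i}$ is preserved, so each search terminates within $N$ levels with $\hat\theta_i=\theta_{u,i}\in[\theta_i,\theta_i+\gamma]$; by Lemma~\ref{lem:diffTheteEst} the returned $\hat\btheta$ is then allocation equivalent to $\btheta$. Moreover on $\mathcal G$ every underestimate level lasts at most $W$ rounds and every overestimate level lasts exactly $W$ rounds (a covered arm with $a_{t,i}\ge\theta_i$ has $Y_{t,i}=0$ deterministically, so $C_i$ climbs to $W$ without interruption).

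It remains to convert ``at most $N$ levels, each at most $W$ rounds, per arm'' into a bound on the number of rounds. Here one uses the resource constraint: every arm that is actively being estimated is allocated $\hat\theta_i\le1$, so at least $\max\{1,\floor{Q}\}$ arms can be kept advancing in each round as long as that many still need it. Treating the at most $KN$ level-completions as jobs of length $W$ run on $\max\{1,\floor{Q}\}$ parallel slots, the total number of rounds is at most
\[
\frac{KN}{\max\{1,\floor{Q}\}}\,W
= \frac{K\log\!\big(K\log_2(\ceil{1+1/\gamma})/\delta\big)}{\max\{1,\floor{Q}\}\log(1/(1-\epsilon))}\,\log_2(\ceil{1+1/\gamma}),
\]
which is the claim.

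The geometric-decay count in (i) and the tail bound plus union bound in (ii) are routine. The \textbf{main obstacle} is the scheduling step (iii): one must verify that CSB-DT can in fact always keep $\floor{Q}$ arms simultaneously covered and advancing while respecting $\sum_i a_{t,i}\le Q$, and --- the more delicate point --- that an arm which momentarily cannot be allocated its $\hat\theta_i$ does not corrupt its own search window, since an uncovered arm sitting at an overestimate level has $a_{t,i}=0<\theta_i$ and a spurious loss could a priori push $\theta_{l,i}$ above $\theta_i$. The right fix is an invariant on the cross-round allocations --- arms with $\theta_{g,i}=0$ always receive $\hat\theta_i$, and only resource freed by shrinking windows is rerouted, and only to arms with $\theta_{g,i}=1$ (lines~16--17) --- after which the counting in (iii) is legitimate. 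This bookkeeping is the part I would develop most carefully.
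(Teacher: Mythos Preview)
Your proposal is correct and follows essentially the same argument as the paper: bound the number of binary-search halvings per arm by $\log_2(\ceil{1+1/\gamma})$, bound the per-level failure probability at underestimates by $(1-\epsilon)^W$, union-bound over the $K$ arms and the at most $\log_2(\ceil{1+1/\gamma})$ underestimate levels to fix $W$, and then divide the total per-arm work by $\max\{1,\floor{Q}\}$ for the parallel scheduling. You are in fact more careful than the paper on step~(iii): the paper handles the scheduling with the single assertion that ``CSB-DT can simultaneously estimate threshold for at least $\max\{1,Q\}$ arms by exploiting the fact that $\hat\theta_i\le1$,'' and does not explicitly address the corruption concern you raise about uncovered arms at overestimate levels.
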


\textbf{Regret Minimization Phase:} 
For this phase, we could use an algorithm that works well for the combinatorial semi-bandits, like SDCB \citep{NIPS16_chen2016combinatorial} and CTS \citep{ICML18_wang2018thompson}. CTS uses Thompson Sampling, whereas SDCB uses the UCB type index. We adopt the CTS to our setting due to better empirical performance. This phase is similar to the regret minimization phase of CSB-ST except that superarm to play is selected by Oracle that uses $KL(\hat\bmu(t), \hat\btheta, Q)$ to identify the arms where the learner has to allocate no resources.

\subsubsection{Regret Upper Bound}
\label{sssec:differentThetaRegretBounds}
Let $\nabla_{\ba}$ and $\nabla_m$ be defined as in Section \ref{sssec:sameThetaRegretBounds}. Let $\gamma>0$, $S_{\ba}=\{i:a_i < \theta_i\}$ for any feasible allocation $\ba$ and $k^\star = |S_{\ba^\star}|$. We redefine $W_T = \log (KT\log_2(\lceil 1 + 1/\gamma\rceil))/\log(1/(1-\epsilon))$.
\begin{restatable}{thm}{regretDiffThreshold}
	\label{thm:regretDiffThreshold}
	Let $(\bmu,\btheta, Q)\in \PCSB$ such that $\gamma>0$, $\mu_1\geq \epsilon$, and $T>W_T{\log_2 (\ceil{1 +  {1}/{\gamma}})}$. Set $\delta=1/T$ in \ref{alg:CSB-DT}. Then the expected regret of \ref{alg:CSB-DT} is upper bound as 
	\begin{align*}
		\EE{\Regret_T} \le &\left(\frac{KW_T{\log_2 \left(\ceil{1+1/\gamma} \right)}}{\max\{1, \floor{Q}\}}\right)\nabla_m  + \left(\sum_{i \in [K]} \max\limits_{S_{\ba}:i\in S_{\ba}}\frac{8|S_{\ba}|\log {T}}{\nabla_{\ba} - 2(k^\star{}^2 + 2)\eta}\right) +\\ 
		&\qquad \qquad \left(\frac{K(K-Q)^2}{\eta^2} +\frac{8\alpha_1}{\eta^2}\left(\frac{4}{\eta^2} + 1\right)^{k^\star} \log\frac{k^\star}{\eta^2} + 3K\right)\nabla_m
	\end{align*}
	for any $\eta$ such that $\forall \ba \in \A, \nabla_{\ba} > 2(k^\star{}^2+2)\eta$ and a problem independent constant $\alpha_1$. 
\end{restatable}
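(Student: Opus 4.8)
The argument is the standard two‑phase decomposition: bound the regret incurred while estimating the thresholds, then bound the regret of the combinatorial‑semi‑bandit phase, conditioning throughout on the event that the estimation phase succeeds.

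\textbf{Step 1: decomposition and the threshold‑estimation phase.} Let $N$ be the (random) length of the threshold‑estimation phase and let $\mathcal{E}$ denote the event that this phase terminates with $N\le T_{\theta_d}$ and returns $\hat\btheta$ satisfying $\hat\theta_i\in[\theta_i,\theta_i+\gamma]$ for every $i\in[K]$. With $\delta=1/T$ and the stated $W$, Lemma~\ref{lem:MultiTheta} gives $\Prob{\mathcal{E}}\ge 1-1/T$ and, on $\mathcal{E}$, $N\le T_{\theta_d}\le KW\log_2(\ceil{1+1/\gamma})/\max\{1,\floor{Q}\}$. Splitting $\EE{\Regret_T}=\EE{\sum_{t\le N}\Delta_{a_t}}+\EE{\sum_{t>N}\Delta_{a_t}}$ and using that every round costs at most $\Delta_m$, on $\mathcal{E}$ the first sum is at most $\Delta_m N\le \Delta_m KW\log_2(\ceil{1+1/\gamma})/\max\{1,\floor{Q}\}$, the first term of the bound. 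On $\mathcal{E}^c$ the whole horizon costs at most $T\Delta_m$, which, weighted by $\Prob{\mathcal{E}^c}\le 1/T$, contributes at most $\Delta_m$; this is absorbed into the $3K\Delta_m$ term.

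\textbf{Step 2: reducing the second phase to a combinatorial semi‑bandit.} Condition on $\mathcal{E}$. Since $\gamma>0$ and $\hat\theta_i\in[\theta_i,\theta_i+\gamma]$, Lemma~\ref{lem:diffTheteEst} shows $\hat\btheta$ is allocation equivalent to $\btheta$; moreover $\hat\theta_i\ge\theta_i$ implies every allocation feasible under $\hat\btheta$ is feasible under $\btheta$, so the family of ``no‑resource'' subsets the algorithm can play is contained in the true feasible family, while allocation equivalence guarantees it still contains a loss‑minimizing subset for the true $\bmu$. By Proposition~\ref{prop:MultiThetaEquivalence}, the regret‑minimization phase is therefore exactly a combinatorial semi‑bandit: the action is the set $L_t$ of arms given no resource, the oracle returns the $\argmin$ over such sets of $\sum_{i\in L}\hat\mu_i(t)$ via $KP(\hat\bmu(t),\hat\btheta,Q)$, and the learner observes $\mathrm{Bernoulli}(\mu_i)$ for $i\in L_t$. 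Because CSB‑DT re‑initialises $S_i=F_i=1$ at the start of this phase (line~21) and the realisations in rounds $t>N$ are fresh i.i.d.\ draws, conditionally on $\mathcal{E}$ this phase is a genuine fresh run of CTS on a fixed instance, and its regret, measured against the true optimum (which by the preceding sentence equals the optimum over the restricted action family), coincides with the combinatorial‑semi‑bandit regret of CTS.

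\textbf{Step 3: invoking the CTS bound, and the main obstacle.} Apply the CTS regret guarantee \citep{ICML18_wang2018thompson} with its free parameter identified with $\eta\in(0,1)$ (chosen small enough that the denominators below are positive, and to optimise the right‑hand side), base arms $[K]$, superarms the sets $S_a=\{i:a_i<\theta_i\}$, optimal‑superarm size $k^\star=|S_{a^\star}|$, suboptimality gaps $\Delta_a$, and maximum gap $\Delta_m$; replacing the algorithm's action family by the larger family of all feasible $S_a$ inside the maxima only inflates the bound. This produces the $\log T$ leading term $\sum_{i\in[K]}\max_{S_a:i\in S_a}8|S_a|\log T/(\Delta_a-2((k^\star)^2+2)\eta)$, the distribution‑free lower‑order term $(K(K-Q)^2/\eta^2+3K)\Delta_m$, and the distribution‑dependent constant $\alpha_1\cdot(8\Delta_m/\eta^2)(4/\eta^2+1)^{k^\star}\log(k^\star/\eta^2)$, where $\alpha_1$ is the numerical constant of that bound; adding the Step~1 contribution gives the claimed inequality. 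The delicate part is Steps~2--3: one must verify that the CTS analysis transfers verbatim despite the algorithm running on the plug‑in thresholds $\hat\btheta$ rather than $\btheta$ — this is exactly why $\gamma>0$ together with allocation equivalence (Lemma~\ref{lem:diffTheteEst}) is needed, so that the optimal value and hence the notion of regret are unchanged — and that the $0$–$1$ knapsack oracle meets the (approximate‑)optimality requirement of CTS, which is where $\alpha_1$ enters. The remaining work is the bookkeeping matching $K,Q,k^\star,\eta,\Delta_m$ to the quantities in the CTS bound, in particular the $k^\star$‑ and $\eta$‑dependence of the two lower‑order terms.
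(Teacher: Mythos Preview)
Your proposal is correct and follows essentially the same route as the paper: decompose into the threshold-estimation phase (bounded via Lemma~\ref{lem:MultiTheta}) and the regret-minimization phase, use Lemma~\ref{lem:diffTheteEst} and Proposition~\ref{prop:MultiThetaEquivalence} to reduce the latter to a combinatorial semi-bandit, and then invoke the CTS bound of \cite{ICML18_wang2018thompson}; the paper packages the last two steps as intermediate Theorems~\ref{thm:CTSRegret} and~\ref{thm:regretDiffThresholdHighConf} and finishes by setting $\delta=1/T$. One small remark: $\alpha_1$ in the paper is simply the problem-independent constant from the CTS analysis itself, not a byproduct of oracle approximation, so that attribution in your Step~3 is slightly off, but it does not affect the argument.
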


The first term of expected regret is due to the threshold estimation phase. Threshold estimation takes $T_{\theta_d}$ rounds to complete, and $\nabla_m$ is the maximum regret that can be incurred in any round. Then the maximum regret due to threshold estimation is bounded by $T_{\theta_d}\nabla_m$. The remaining terms correspond to the regret due to the regret minimization phase. Further, the expected regret of \ref{alg:CSB-DT} can be shown be $\EE{\Regret_T}\leq O(K\log T/\nabla_{\min})$, where $\nabla_{\min}$ is the minimum gap between the mean loss of optimal allocation and any non-optimal allocation.

	\section{Experiments}
	\label{sec:experiments}
	%!TEX root =  main.tex

We ran the computer simulations to evaluate the empirical performance of proposed algorithms. Our simulations involved two synthetically generated instances. In Instance 1, the threshold is the same for all arm, whereas Instance 2, it varies across arms. The details of the instances are as follows:

\textbf{Instance 1 (Identical Threshold):} It has $K = 20, Q=7, \theta_c=0.7, \delta=0.1,\epsilon=0.1$ and $T=10000$. The loss of arm $i$ is Bernoulli distribution with parameter $0.25 + (i-1)/50$. 
\\
\textbf{Instance 2 (Different Thresholds):} It has $K = 5, Q=2, \delta=0.1,\epsilon=0.1,\gamma=10^{-3}$ and $T=5000$. The mean loss vector is $\bmu = [0.9, 0.89,0.87,0.58,0.3]$ and corresponding  threshold vector is $\btheta=[0.7,0.7,0.7,0.6,0.35]$. The loss of arm $i$ is Bernoulli distributed with parameter $\mu_i$.

For Instance 1, we have only varied the number of resource $Q$, and regret of \ref{alg:CSB-ST} is shown in \cref{fig:QSameTheta}. We observe that when resources are small, the learner can allocate resources to a few arms but observes loss from more arms. On the other hand, when resources are more, the learner allocates resources to more arms and observes loss from fewer arms. Thus as resources increase, we move from semi-bandit feedback to bandit feedback and hence regret increase with the resources. Next, we have only varied $\theta_c$ in Instance 1, and the regret of \ref{alg:CSB-ST} is shown in \cref{fig:TSameTheta}. Similar trends are observed as the decrease in threshold leads to an increase in the number of arms that can be allocated resources and vice-versa. Therefore the amount of feedback decreases as the threshold decreases and leads to more regret. We repeated the experiment 100 times and plotted the regret with a 95\% confidence interval (the vertical line on each curve shows the confidence interval). The empirical results validate sub-linear bounds for our algorithms. 

\vspace{-2.5mm}
\begin{figure}[H]
	\centering
	\scriptsize
	\begin{minipage}[b]{0.309\textwidth}
		\captionsetup{justification=centering}
		\captionsetup{font=scriptsize, labelfont=bf}		\includegraphics[width=\linewidth]{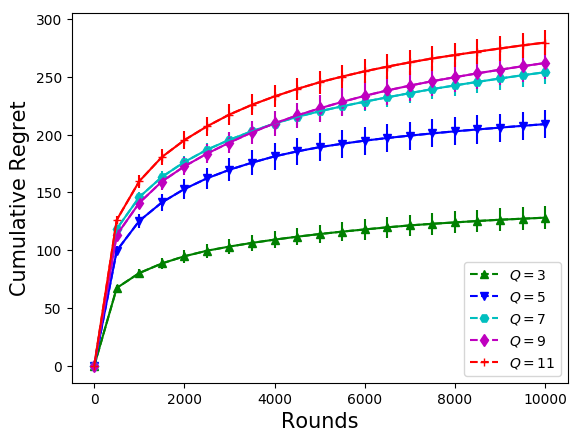}
		\caption{Varying amount of resources.}
		\label{fig:QSameTheta}
	\end{minipage}\qquad
	\begin{minipage}[b]{0.309\textwidth}
		\captionsetup{justification=centering}
		\captionsetup{font=scriptsize, labelfont=bf}
		\includegraphics[width=\linewidth]{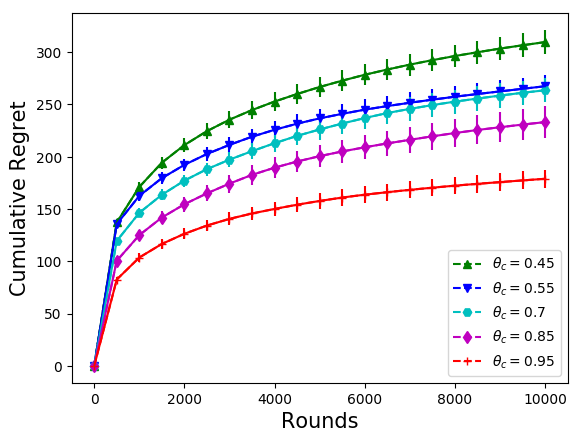}
		\caption{Varying value of same threshold.}
		\label{fig:TSameTheta}
	\end{minipage}\qquad
	\begin{minipage}[b]{0.309\textwidth}
		\captionsetup{justification=centering}
		\captionsetup{font=scriptsize, labelfont=bf}
		\includegraphics[width=\linewidth]{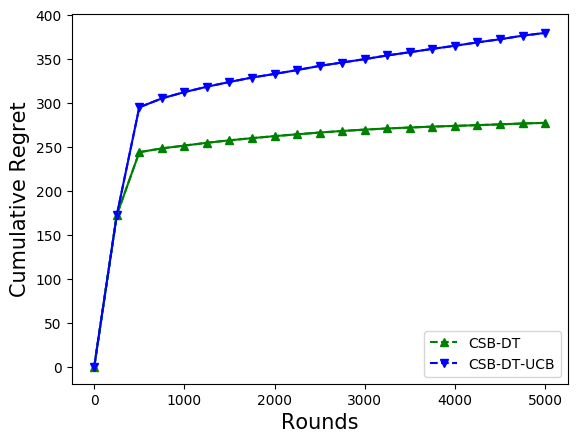}
		\caption{UCB and TS based Algorithms.}
		\label{fig:DiffTheta}
	\end{minipage}
\end{figure}
\vspace{-4mm}

We also compare the performance of \ref{alg:CSB-DT} against the CSB-DT-UCB algorithm, which uses the UCB type index as used in the SDCB algorithm \citep{NIPS16_chen2016combinatorial} on Instance 2. As shown in \cref{fig:DiffTheta}, as expected, Thompson Sampling (TS) based \ref{alg:CSB-DT} outperforms its UCB based counterpart CSB-DT-UCB. The pseudo-code of CSB-DT-UCB is given in the supplementary material.

	\section{Conclusion and Future Extensions}
	\label{sec:conclusion}
	%!TEX root =  main.tex

In this work, we proposed a novel framework for resource allocation problems using a variant of semi-bandits named censored semi-bandits. In our setup, loss observed from an arm depends on the amount of resource allocated, and hence, the loss can be censored. We consider a threshold-based model where loss from an arm is observed when allocated resource is below a threshold. The goal is to assign a given resource to arms such that total expected loss is minimized. We considered two variants of the problem, depending on whether or not the thresholds are the same across the arms. For the variant where thresholds are the same across the arms, we established that it is equivalent to the Multiple-Play Multi-Armed Bandit problem. For the second variant where threshold can depend on the arm, we established that it is equivalent to a more general Combinatorial Semi-Bandit problem. Exploiting these equivalences, we developed algorithms that enjoy optimal performance guarantees. 

We decoupled the problem of threshold and mean loss estimation. It would be interesting to explore if this can be done jointly, leading to better performance guarantees. Another new extension of work is to relax the assumptions that mean losses are strictly positive, and time horizon $T$ is known.

	% %%%%%%%%%%%%%%%%%%%%%%%%%%%%%%%%%%%%%%%%%%%%%%%%
	\newpage
	\section*{Acknowledgments}
	    Arun Verma would like to thank travel support from Google and NeurIPS. Manjesh K. Hanawal would like to thank the support from INSPIRE faculty fellowships from DST, Government of India, SEED grant (16IRCCSG010) from IIT Bombay, and Early Career Research (ECR) Award from SERB. Initial discussions of this work were done when Raman Sankaran was at Conduent Labs India.
	
	\bibliographystyle{unsrtnat} % Other options: unsrtnat, unsrt, abbrvnat, plainnat, rusnat
	\bibliography{ref}

	\ifsup
    	\newpage
    	\noindent\rule{\linewidth}{4pt} 
    	\vspace{1.5mm}
    	
    	\centerline{\Large \bf Supplementary Material for `Censored Semi-Bandits: } \vspace{3mm} \centerline{\Large\bf A Framework for Resource Allocation with Censored Feedback'
    	}
    	\vspace{0.5mm}
    	\hrulefill \\
    
    	\appendix
    	%!TEX root =  main.tex

\section{Proofs related to Section \ref{sec:same_theta}}

\subsection{Proof of Lemma \ref{lem:thetaSet}}
\ThetaSet*
\begin{proof}
	The case $\floor{Q/\theta_c} \geq K$ is trivial. We consider the case $\floor{Q/\theta_c} < K$. By definition $M=\min\{\floor{Q/\theta_c},K\}$. We have $M \le Q/\theta_c$ and $\theta_c \le Q/M \doteq \hat{\theta}_c$. Hence $\hat{\theta}_c \ge \theta_c$. Therefore $\hat{\theta}_c$ fraction of resource allocation at a location has same reduction in the mean loss as $\theta_c$. Further, in both the instances $(\bmu, \theta_c, Q)$ and $(\bmu, \hat{\theta}_c, Q)$ the optimal allocations incur no loss from the bottom-$M$ and the same amount of loss from the top $K-M$ arms. Hence the mean loss reduction for both the instances is same. This completes the proof of first part. As $M \in \{1, \ldots, K\}$ and $\hat{\theta}_c \le 1$, the possible value of $\hat{\theta}_c$ is only one of element in the set $\Theta = \{  Q/K, Q/(K-1), \cdots, \min\{1, Q\}\}$. 
\end{proof}

\subsection{Proof of Lemma \ref{lem:sameThresholdEstRounds}}
\sameThresholdEstRounds*
\begin{proof}
	When $\hat{\theta}_c < \theta_c$, it can happen that no loss is observed for $W$ consecutive rounds that leads to incorrect estimation of $\theta_c$. We want to set $W$ such a way that the probability of occurring of such event is small. This probability is bounded as follows:
	\begin{align*}
	&\Prob{\text{No loss is observed on $Q/\hat{\theta}_c$ arms for $W$ consecutive rounds at $\hat{\theta}_c<\theta_c$ (underestimate)}} \\
	&\qquad = \prod_{i \le Q/\hat{\theta}_i} (1 - \mu_i)^W \hspace{5mm}\text{\big(as $(1 - \mu_i)$ is the probability of not observing loss at location $i$\big)}\\
	&\qquad \le \prod_{i \le M} (1 - \mu_i)^W \hspace{8mm} \mbox{\big(as $\hat{\theta}_c < \theta_c \implies M \le Q/\hat{\theta}_c$\big)} \\
	&\qquad \le \prod_{i \le M} (1 - \epsilon)^W = (1 - \epsilon)^{MW} \le (1 - \epsilon)^{QW} \hspace{5mm}\text{\big(as $M\ge Q$\big)}%\label{equ:roundMistakeProb}
	\end{align*}
	Since we are using binary search, the algorithm goes through at most $\log_2(|\Theta|)$ underestimates of $\theta_c$. Let $I$ denote the indices of these underestimates in $\Theta$
	\begin{align*}
	&\Prob{\text{No loss is observed for consecutive $W$ rounds at any  underestimate of $\theta_c$}} \\
	&\qquad \le \sum_{i \in I}\Prob{\text{No loss is observed for consecutive $W$ rounds at the underestimate $\Theta(i)$}} \\
	&\qquad \le (1 - \epsilon)^{QW}\log_2(|\Theta|)
	\end{align*}
	
	We want to bound the probability of making mistake by $\delta$. We get,
	\begin{equation*}
	(1 - \epsilon)^{QW}\log_2(|\Theta|) \le \delta \implies (1 - \epsilon)^{QW} \le \delta/\log_2(|\Theta|)
	\end{equation*}
	Taking log both side, we get
	\begin{align*}
	QW\log(1 - \epsilon) \le \log(\delta/\log_2(|\Theta|)) &\implies QW\log\left(\frac{1}{1 - \epsilon}\right) \ge \log(\log_2(|\Theta|)/\delta)\\ &\implies W \ge \frac{\log(\log_2(|\Theta|)/\delta)}{Q\log\left(\frac{1}{1 - \epsilon}\right) }
	\end{align*}
	We set
	\begin{equation}
	W=\frac{\log(\log_2(|\Theta|)/\delta)}{Q\log\left(\frac{1}{1 - \epsilon}\right) }
	\end{equation}
	Hence, the minimum rounds needed to find $\hat{\theta}_c$ with probability at least $1-\delta$ is $W\log_2(|\Theta|)$.
\end{proof}

\subsection{Proof of Proposition \ref{prop:RegretEquiST}}
\RegretEquiST*
\begin{proof}
	Let $\pi$ be a policy on $P:=(\bmu,\theta_c,Q) \in \PCSB^c$. The regret of policy $\pi$ on $P$ is given by
	\[\Regret_T(\pi,P)=\sum_{t=1}^T \left(\sum_{i=1}^K \mu_i\one{a_{t,i} < \theta_c} -\sum_{i=1}^K \mu_i\one{a^\star_i < \theta_c}\right), \]
	where $\ba^\star$ is the optimal allocation for $P$. Consider $f(P)=(\bmu,m) \in \PMP$ where $\bmu$ is the same as in $P$ and $m=K-M$, where $M=\min\{\lfloor Q/\theta_c\rfloor,K\}$. The regret of policy $\pi^\prime$ on $f(P)$ is given by 
	\[\Regret_T(\pi^\prime,f(P))=\sum_{t=1}^T\left(\sum_{i \in S_t} \mu_i - \sum_{i=1}^{K-M}\mu_i\right),\]
	where $S_t$ is the superarm played in round $t$. Recall the ordering $\mu_1 \le \mu_2 \le \ldots \le \mu_K$. It is clear that $\sum_{i=1}^K \mu_i\one{a^\star_i< \theta_c}=\sum_{i=1}^{K-M}\mu_i$. 
	Let $L_t$ be the set of arms where no resources are allocated by $\pi$ in round $t$. Since, loss only incurred from arms in the set $L_t$, we have $\sum_{i=1}^K \mu_i\one{a_{t,i} < \theta_c}=\sum_{i \in L_t}\mu_i$. From the definition of $\pi^\prime$, notice that in round $t$, $\pi^\prime$ selects superarm $S_t=L_t$, i.e., set of arms returned by $\pi$ for which no resourced are applied. Hence $\sum_{i=1}^K \mu_i\one{a_{t,i}< \theta_c}=\sum_{i \in S_t} \mu_i $. This establishes the regret of $\pi$ on $P$ is same as regret of $\pi^\prime$ on $f(P)$ and we get $\Regret(\PMP)\leq \Regret (\PCSB^c)$.
	Similarly, we can establish the other direction of the proposition and get $\Regret(\PCSB^c)\leq \Regret (\PMP)$. Thus we conclude  $\Regret(\PCSB^c)= \Regret (\PMP)$.
\end{proof}

\subsection{Proof of Theorem  \ref{thm:regretSameThreshold}}
Let $M, \nabla_m$ and $W$ be defined as in Section \ref{sssec:sameThetaRegretBounds}. We use the following results to prove the theorem.
\begin{thm}
	\label{thm:MPRegret}
	Let $\hat\theta_c$ be allocation equivalent to $\theta_c$ for instance $(\bmu,\theta_c,Q)$. Then, the expected  regret of the regret minimization phase of \ref{alg:CSB-ST} for $T$ rounds is upper bound as     
	\begin{align}
	\EE{\Regret_T} \le O\left((\log T)^{{2}/{3}}\right) + \sum_{i \in [K]\setminus [K-M]} \frac{(\mu_i-\mu_{K-M} )\log {T}}{d( \mu_{K-M},\mu_i)}.
	\end{align}
\end{thm}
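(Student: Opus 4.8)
The plan is to reduce the regret incurred during the regret minimization phase to the regret of Multiple-Play Thompson Sampling (MP-TS) on the associated MP-MAB instance via Proposition~\ref{prop:RegretEquiST}, and then invoke the known optimal regret bound for MP-TS with Bernoulli arms \citep{ICML15_komiyama2015optimal}. Because $\hat\theta_c$ is allocation equivalent to $\theta_c$, the value $M=Q/\hat\theta_c$ used by the algorithm equals $\min\{\floor{Q/\theta_c},K\}$, and throughout the regret minimization phase each arm is given allocation either $0$ or $\hat\theta_c\ge\theta_c$; since an arm is censored precisely when its allocation is at least its threshold, giving an arm $\hat\theta_c$ produces the same (zero) feedback and loss as giving it $\theta_c$. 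Hence the censored feedback seen in this phase is identical to that under the true threshold $\theta_c$ restricted to $\{0,\theta_c\}$-valued allocations, so the phase acts as a policy on $\PCSB^c$ of the form described before Proposition~\ref{prop:RegretEquiST} (whose proof then applies verbatim with $\hat\theta_c$ in place of $\theta_c$), and this policy is precisely the image $\pi'$ of the MP-TS policy $\pi$: in each round it draws Beta posterior samples for all $K$ arms and lets the unallocated set $L_t$ be the $K-M$ arms with smallest samples, i.e.\ the played superarm in the MP-MAB instance $f(P)=(\bmu,K-M)$.

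First I would record the per-round identity: by the ordering $\mu_1\le\cdots\le\mu_K$ and allocation equivalence, an optimal allocation leaves arms $1,\dots,K-M$ unallocated, so $\sum_i\mu_i\one{a_i^\star<\theta_c}=\sum_{i=1}^{K-M}\mu_i$; and since the phase incurs loss only from arms in $L_t$, the instantaneous CSB regret equals $\sum_{i\in L_t}\mu_i-\sum_{i=1}^{K-M}\mu_i$, which is exactly the instantaneous MP-MAB regret of superarm $L_t$ against the optimal superarm $\{1,\dots,K-M\}$. Summing over rounds $T_{\theta_s}+1,\dots,T$ and invoking Proposition~\ref{prop:RegretEquiST} identifies the phase's regret with that of MP-TS on $f(P)$. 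The one subtlety is that the phase begins at the random round $T_{\theta_s}+1$ with freshly reset priors $S_i=F_i=1$: conditioning on $T_{\theta_s}=t_0$ and on the event that $\hat\theta_c$ is allocation equivalent---both functions of the estimation-phase samples, which are independent of and identically distributed to the samples driving the remaining $T-t_0$ rounds---the phase is exactly MP-TS started from scratch and run for $T-t_0\le T$ rounds, whose expected regret is at most that of MP-TS run for the full horizon $T$, since the MP-TS regret is nondecreasing in the horizon. Taking expectation over $t_0$ removes the conditioning.

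It then remains to plug in the MP-TS guarantee \citep{ICML15_komiyama2015optimal}: for Bernoulli arms the expected regret of MP-TS over $T$ rounds is at most $\sum_{i\in[K]\setminus[K-M]}\frac{(\mu_i-\mu_{K-M})\log T}{d(\mu_{K-M},\mu_i)}+O\big((\log T)^{2/3}\big)$, the leading term matching the lower bound~\eqref{eqn:LowerBound}; the additive $O((\log T)^{2/3})$ absorbs the lower-order contributions of the Thompson-sampling analysis---rounds in which a posterior sample or an empirical mean deviates atypically from its true value, together with the slack from letting the analysis's accuracy parameter shrink (e.g.\ like $(\log T)^{-1/3}$) so that the leading constant is the exact Kullback--Leibler term rather than a $(1+\epsilon)$ multiple of it. Combining this with the reduction above yields the stated bound. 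I expect the substance of the argument to be the reduction---checking that allocation equivalence makes the censored feedback coincide with MP-MAB semi-bandit feedback and that the data-dependent start time $T_{\theta_s}$ does not disturb this coincidence---while the genuinely hard part, the tight Kullback--Leibler regret analysis of MP-TS, is imported from \citep{ICML15_komiyama2015optimal} and is the step one most wants to treat as a black box.
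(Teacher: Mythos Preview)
Your proposal is correct and follows essentially the same route as the paper: use allocation equivalence together with Proposition~\ref{prop:RegretEquiST} to identify the regret minimization phase with MP-TS on the MP-MAB instance $(\bmu,K-M)$, and then import Theorem~1 of \cite{ICML15_komiyama2015optimal}. Your additional care about the random start time $T_{\theta_s}$ and the independence of the two phases' samples is sound but is simply glossed over in the paper, which applies the MP-TS bound directly.
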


\begin{proof}
	As $\hat\theta_c$ be allocation equivalent to $\theta_c$, the instances $(\bmu,\theta_c,Q)$ and $(\bmu,\hat{\theta}_c,Q)$ have same minimum loss. Also, by the equivalence established is Proposition \ref{prop:RegretEquiST}, the regret minimization phase of \ref{alg:CSB-ST} is solving a MP-MAB instance. Then we can directly apply Theorem 1 of \cite{ICML15_komiyama2015optimal} to obtain the regret bounds by setting $k=K-M$ and noting that we are in the loss setting and a mistake happens when a arm $i \in [K]\setminus[K-M]$ is in selected superarm.
\end{proof}
\begin{thm}
	\label{thm:regretHighConf}
	With probability at least $1-\delta$, the expected cumulative regret of \ref{alg:CSB-ST} is upper bounded as
	\begin{equation*}
	\EE{\Regret_T} \le W\log_2{(|\Theta|)} \nabla_m  + O\left((\log T)^{{2}/{3}}\right) + \sum_{i \in [K]\setminus [K-M]} \frac{(\mu_i-\mu_{K-M} )\log {T}}{d( \mu_{K-M},\mu_i)}.
	\end{equation*}
\end{thm}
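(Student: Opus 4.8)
The plan is to decompose the regret of CSB-ST along its two phases and then combine a high-probability bound on the length of the threshold estimation phase with the regret bound for the regret minimization phase. First I would write $\EE{\Regret_T} = \EE{\Regret_{T_{\theta_s}}} + \EE{\Regret_T - \Regret_{T_{\theta_s}}}$, where the first term is the regret accumulated during threshold estimation and the second is the regret accumulated afterwards. For the first term, in every round of the estimation phase the per-round regret is at most $\Delta_m$ by definition of $\Delta_m = \max_{\ba \in \A}\Delta_a$, so $\Regret_{T_{\theta_s}} \le \Delta_m T_{\theta_s}$ deterministically. By Lemma~\ref{lem:sameThresholdEstRounds}, with probability at least $1-\delta$ we have $T_{\theta_s} \le W \log_2(|\Theta|)$ (using the value of $W$ plugged into the lemma's bound, noting $\max\{1,\floor{Q}\} \ge Q$ is handled by the choice of $W$ in the theorem statement). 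On the complement event, of probability at most $\delta$, I bound $\Regret_{T_{\theta_s}}$ trivially by $\Delta_m T$; choosing $\delta = 1/T$ makes this contribution $O(\Delta_m)$, which is absorbed into the $O((\log T)^{2/3})$ term.

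Next I would handle the regret minimization phase. Conditioned on the (high-probability) event that the threshold estimation phase returns $\hat\theta_c$ allocation equivalent to $\theta_c$, Theorem~\ref{thm:MPRegret} applies directly: the regret of the regret minimization phase over its $T - T_{\theta_s} \le T$ rounds is at most
\begin{equation*}
O\left((\log T)^{2/3}\right) + \sum_{i \in [K]\setminus[K-M]} \frac{(\mu_i - \mu_{K-M})\log T}{d(\mu_{K-M},\mu_i)}.
\end{equation*}
This step uses Proposition~\ref{prop:RegretEquiST} to identify the phase with an MP-MAB instance and the allocation-equivalence of $\hat\theta_c$ (Lemma~\ref{lem:thetaSet}) to ensure the optimal superarm for that MP-MAB instance corresponds to the optimal allocation of the original CSB instance, so that no extra bias is introduced by using $\hat\theta_c$ in place of $\theta_c$.

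Combining the two pieces gives exactly Theorem~\ref{thm:regretHighConf}: with probability at least $1-\delta$,
\begin{equation*}
\EE{\Regret_T} \le \Delta_m W \log_2(|\Theta|) + O\left((\log T)^{2/3}\right) + \sum_{i \in [K]\setminus[K-M]} \frac{(\mu_i - \mu_{K-M})\log T}{d(\mu_{K-M},\mu_i)},
\end{equation*}
and then Theorem~\ref{thm:regretSameThreshold} follows by taking $\delta = 1/T$ and folding the low-probability ($\le \delta$) contribution of at most $\Delta_m T \cdot \delta = \Delta_m$ into the lower-order terms. The main obstacle I anticipate is being careful about the conditioning: the regret bound of Theorem~\ref{thm:MPRegret} is stated assuming $\hat\theta_c$ is allocation equivalent, so one must argue that on the failure event (wrong $\hat\theta_c$) the total regret is still controlled — here the deterministic bound $\Regret_T \le \Delta_m T$ together with $\delta = 1/T$ saves us — and one must also make sure the length of the estimation phase $T_{\theta_s}$, which is itself random, is correctly substituted (its high-probability upper bound $W\log_2(|\Theta|)$ is a constant, so the truncation $T - T_{\theta_s}$ in the second phase is harmlessly replaced by $T$). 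A minor additional point is reconciling the $Q$ versus $\max\{1,\floor{Q}\}$ in the denominator of $W$ between the lemma and the theorem; since the theorem's $W$ uses $Q\log(1/(1-\epsilon))$ in the denominator while the lemma uses $\max\{1,\floor Q\}$, one checks the theorem's $W$ is at least as large, so the lemma's round bound still holds with the theorem's $W$.
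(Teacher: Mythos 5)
Your proposal is correct and follows essentially the same route as the paper: bound the threshold-estimation phase by $\Delta_m$ per round times the high-probability round count $W\log_2(|\Theta|)$ from Lemma~\ref{lem:sameThresholdEstRounds}, then, conditioned on the estimate being allocation equivalent, apply Theorem~\ref{thm:MPRegret} (via Proposition~\ref{prop:RegretEquiST}) to the regret-minimization phase and add the two contributions. Your extra care about the failure event and $\delta=1/T$ is precisely what the paper defers to the proof of Theorem~\ref{thm:regretSameThreshold}, so it is consistent with, not a departure from, the paper's argument.
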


\begin{proof}
	\ref{alg:CSB-ST} has two phases: threshold estimation and loss minimization. Threshold estimation runs for at most $W\log_2{(|\Theta|)}$ rounds and returns an allocation equivalent threshold with probability at least $1-\delta$. The maximum regret incurred in this phase is $W\log_2{(|\Theta|)} \nabla_m$. Given that the threshold estimated in the threshold estimation phase is correct, the regret incurred in the regret minimization phase is given by Theorem \ref{thm:MPRegret}. Thus the expected regret of \ref{alg:CSB-ST} is given by the sum of regret incurred in these two phases and holds with probability at least $1-\delta$.  
\end{proof}

\regretSameThreshold*
\begin{proof}
	The bound follows from Theorem \ref{thm:regretHighConf} by setting $\delta=1/T$ and unconditioning the expected regret obtained in the regret minimization phase of  \ref{alg:CSB-ST} .
\end{proof}

\section{Proofs related to Section \ref{ssec:different_theta}}

\subsection{Proof of Proposition \ref{prop:diffThetaOptiSoln}}
\diffThetaOptiSoln*
\begin{proof}
	Assigning $\theta_i$ fraction of resources to the arm $i$ reduces the total mean loss by amount $\mu_i$. Our goal is to allocate resources such that total mean loss is minimize i.e., $\min\limits_{\ba \in \A}\sum_{i\in[K]}\mu_i\one{a_i < \theta_i}$. Note that the maximization version of same optimization problem is $\max\limits_{\ba \in \A}\sum_{i\in[K]}\mu_i\one{a_i \ge \theta_i}$ which is same as solving a 0-1 knapsack with capacity $Q$ where item $i$ has value $\mu_i$ and weight $\theta_i$. 
\end{proof}

\subsection{Proof of Lemma \ref{lem:diffTheteEst}}
\diffTheteEst*
\begin{proof}
	Let $L^\star = \left\{i: a_i^\star < \theta_i\right\}$ and $r = Q - \sum_{i: a_i^\star \ge \theta_i}\theta_i$. If resource $r$ is allocated to any arm $i \in L^\star$, minimum value of mean loss will not change as $r < \min_{i \in L^\star} \theta_i$. If we can allocate $\gamma=r/K$ fraction of $r$ to each arm $i \in K$, the minimum mean loss still remains same. If estimated threshold of every arm $i \in K$ lies in $[\theta_i, \theta_i+\gamma]$ then using Theorem 3.2 of \cite{DO13_hifi2013sensitivity}, $KP(\bmu,\btheta, Q)$ and $KP(\bmu, \hat\btheta, Q)$ has the same optimal solution because of having the same mean loss for both the problem instances.
\end{proof}

\subsection{Proof of Lemma \ref{lem:MultiTheta}}
\MultiTheta*
\begin{proof}
	For any arm $i \in [K]$, we want $\hat{\theta}_i \in [\theta_i, \theta_i+\gamma]$. As $\theta_i \in (0,1]$, we can divide interval $[0,1]$ into a discrete set $\Theta \doteq \left\{0, \gamma, 2\gamma, \ldots, 1\right\}$ and note that $|\Theta| = \ceil{1+ {1}/{\gamma}}$. As search space is reduced by half in each change of $\hat\theta_i$, the maximum change in $\hat\theta_i$ is upper bounded by $\log_2|\Theta|$ to make sure that $\hat{\theta}_i \in [\theta_i, \theta_i+\gamma]$. When $\hat{\theta}_i$ is underestimated and no loss is observed for consecutive $W$ rounds, a mistake is happened by assuming that current allocation is overestimated. We set $W$ such that the probability of estimating wrong $\hat{\theta}_i$ is small and bounded as follows:
	\begin{align*}
	&\Prob{\text{No loss is observed for consecutive $W$ rounds when $\hat{\theta}_i$ is underestimated}} \\
	&\qquad = (1 - \mu_i)^W \hspace{5mm}\text{\big(as $(1 - \mu_i)$ is the probability of not observing loss at arm $i$\big)}\\
	&\qquad \le (1 - \epsilon)^W \hspace{6.8mm} \text{\big(since $\forall i \in [K]: \mu_i > \epsilon$\big)}
	\end{align*}
	Since we are doing binary search, the algorithm goes through at most $\log_2(|\Theta|)$ underestimates of $\theta_i$. Let $I$ denote the indices of these underestimates in $\Theta$
	\begin{align*}
	&\Prob{\text{No loss is observed for consecutive $W$ rounds when  $\hat{\theta}_i$ is underestimated}} \\
	&\qquad \le \sum_{i\in I}\Prob{\text{No loss is observed for consecutive $W$ rounds when  $\hat{\theta}_i$ is underestimated}} \\
	&\qquad \le (1 - \epsilon)^{W}\log_2(|\Theta|)
	\end{align*}
	Next, we will bound the probability of making mistake for any of the arm. That is given by
	\begin{align*}
	&\Prob{\exists i  \in [K], \hat\theta_i \in \Theta: \text{No loss is observed for consecutive $W$ rounds when  $\hat{\theta}_i$ is underestimated}} \\
	&~~ \le \sum_{i=1}^{K}\Prob{\exists \hat\theta_i \in \Theta: \text{No loss is observed for consecutive $W$ rounds when  $\hat{\theta}_i$ is underestimated}}  \\
	&~~ \le K(1 - \epsilon)^{W}\log_2(|\Theta|)
	\end{align*}
	
	We want to bound the above probability of making a mistake by $\delta$ for all arms. We get,
	\begin{equation*}
	K (1 - \epsilon)^{W}\log_2(|\Theta|) \le \delta \implies (1 - \epsilon)^{W} \le \delta/K \log_2(|\Theta|)
	\end{equation*}
	Taking log both side, we get
	\begin{align*}
	W\log(1 - \epsilon) \le \log(\delta/K \log_2(|\Theta|)) &\implies W\log\left({1}/{(1 - \epsilon)}\right) \ge \log(K \log_2(|\Theta|)/\delta)\\
	&\implies W \ge \frac{\log(K \log_2(|\Theta|)/\delta)}{\log\left({1}/{(1 - \epsilon)}\right) }
	\end{align*}
	We set 
	\begin{equation}
	W = \frac{\log(K \log_2(|\Theta|)/\delta)}{\log\left({1}/{(1 - \epsilon)}\right) }
	\end{equation}
	
	Therefore, the minimum rounds needed for an arm $i$ to correctly find $\hat{\theta}_i$ with probability at least $1-\delta$ is upper bounded by $W\log_2(|\Theta|)$. \ref{alg:CSB-DT} can simultaneously estimate threshold for at least $\max\{1, \floor{Q}\}$ arms by exploiting the fact that $\hat\theta_i \le 1$. The threshold for $K$ arms needs to be estimate, hence, minimum rounds needed to correctly find all $\hat\theta_i \in [\theta_i, \theta_i+\gamma]$ with probability at least $1-\delta$ is $KW\log_2(|\Theta|)/\max\{1, \floor{Q}\}$.
\end{proof}

\subsection{ Proof of Proposition \ref{prop:MultiThetaEquivalence}}
\subsection*{Equivalence of CSB with different thresholds and Combinatorial Semi-Bandit }
In stochastic Combinatorial Semi-Bandits (CoSB), a learner can play a subset from $K$ arms in each round, also known as superarm, and observes the loss from each arm played \citep{ICML13_chen2013combinatorial,NIPS16_chen2016combinatorial,ICML18_wang2018thompson}. The size of superarm can vary, and the mean loss of a superarm only depends on the means of its constituent arms. The goal of the learner is to select a superarm that has the smallest loss. A policy in CoSB selects a superarm in each round based on the past information. The performance of a policy is measured in terms of regret defined as the difference between cumulative loss incurred by the policy and that incurred by playing an optimal superarm in each round. Let $(\bmu, \mathcal{I}) \in [0,1]^K \times 2^{[K]}$ denote an instance of CoSB where $\bmu$ denote the mean loss vector, and $\mathcal{I}$ denotes the set of superarms. Let $\PCSB^d \subset \PCSB$ denote the set of CSB instances with a different threshold for arms. For any  $(\bmu,\btheta,Q) \in \PCSB^d$ with $K$ arms and known threshold $\btheta$, let $(\bmu, \mathcal{I})$ be an instance of CoSB with $K$ arms and each arm has the same Bernoulli distribution as the corresponding arm in the CSB instance. Let $\PCoSB$ denote set of resulting CoSB problems and $g: \PCSB \rightarrow \PCoSB$ denote the above transformation.

Let $\pi$ be a policy on $\PCoSB$. $\pi$ can also be applied on any $(\bmu,\btheta,Q) \in \PCSB^d$ with known $\btheta$ to decide which set of arms are allocated resource as follows: in round $t$, let information $(L_1, Y_1, L_2,Y_2, \ldots, L_{t-1}, Y_{t-1})$ collected on an CSB instance, where $L_s$ is the set of arms where no resource is applied and $Y_s$ is the samples observed from these arms, is given to $\pi$ which returns a set $L_t$. Then all arms other than arms in $L_t$ are given resource equal to their  estimate threshold. Let this policy on $(\bmu,\btheta,Q) \in \PCSB^d$ is denoted as $\pi^\prime$. In a similar way a policy $\beta$ on $\PCSB$ can be adopted to yield a policy for $\PCoSB$ as follows: in round $t$, the information $(S_1, Y_1, S_2,Y_2, \ldots, S_{t-1}, S_{t-1})$, where $S_s$ is the superarm played in round $s$ and $Y_s$ is the associated loss observed from each arms in $S_s$, collected on an CoSB instance is given to $\pi$. Then $\pi$ returns a set $S_t$ where no resources has to applied. The superarm corresponding to $S_t$ is then played. Let this policy on $\PCoSB$ be denote as $\beta^\prime$. Note that when $\btheta$ is known, the mapping is invertible.

%    \subsection{Proof of Proposition \ref{prop:MultiThetaEquivalence}}
\MultiThetaEquivalence*
\begin{proof}
	Let $\pi$ be a policy on $P:=(\bmu,\btheta,Q) \in \PCSB^d$. The regret of policy $\pi$ on $P$ is given by
	\[\Regret_T(\pi,P)=\sum_{t=1}^T \left(\sum_{i=1}^K \mu_i\one{a_{t,i}< \theta_i} -\sum_{i=1}^K \mu_i\one{a^\star_i < \theta_i}\right), \]
	where $\ba^\star$ is the optimal allocation for $P$. $g(P)=(\bmu,\mathcal{I}) \in \PCoSB$ where $\bmu$ is the same as in $P$ and $\mathcal{I}$ contains all superarms (set of arms) for which resource allocation is feasible. The regret of policy $\pi^\prime$ on $g(P)$ is given by 
	\[\Regret_T(\pi^\prime,g(P))=\sum_{t=1}^T\big(l(S_t,\bmu) - l(S^\star,\bmu)\big)\]
	where $S_t$ is the superarm played in round $t$,  $S^\star$ is optimal superarm, and $l$ returns mean loss.
	Note that outcomes of $l(S,\bmu)$ only depends on mean loss of constituents arms of the superarm $S$. In our setting, $l(S,\bmu) = \sum_{i \in S}\mu_i$ where $S= \left\{i: a_i <\theta_i \right\}$ for allocation $\ba \in \A$. It is clear that $\sum_{i=1}^K \mu_i\one{a^\star_i < \theta_i}=l(S^\star, \bmu)$. Let $L_t$ be the set of arms where no resources is allocated by $\pi$ in round $t$. Since, loss only incurred from arms in the set $L_t$, we have $\sum_{i=1}^K \mu_i\one{a_{t,i}< \theta_c}=\sum_{i \in L_t}\mu_i$. From the definition of $\pi^\prime$, notice that in round $t$, $\pi^\prime$ selects superarm $S_t=L_t$, i.e., set of arms returned by $\pi$ for which no resourced are applied. Hence $\sum_{i=1}^K \mu_i\one{a_{t,i}< \theta_c}=\sum_{i \in S_t} \mu_i $. This establishes the regret of $\pi$ on $P$ is same as regret of $\pi^\prime$ on $g(P)$ and we get $\Regret(\PCoSB)\leq \Regret (\PCSB^d)$.
	Similarly, we can establish the other direction of the proposition and get $\Regret(\PCSB^d)\leq \Regret (\PCoSB)$. Thus we conclude  $\Regret(\PCSB^d)= \Regret (\PCoSB)$.
\end{proof}

\subsection{Proof of Theorem  \ref{thm:regretDiffThreshold}}
Let $\nabla_{\ba}$ and $\nabla_m$ be defined as in Section \ref{sssec:sameThetaRegretBounds}. Let $S_{\ba}=\{i:a_i < \theta_i\}$ for a feasible allocation $\ba$, $k^\star = |S_{\ba^\star}|$ and $W$ be same as in Section \ref{sssec:differentThetaRegretBounds}. Note that we will never be able to sample a $\hat\mu_i(t)$ to be precisely the true value $\mu_i$ using Beta distribution.  We need to consider the $\eta$-neighborhood of $\mu_i$, and such $\eta$ term is common in the analysis of most Thompson Sampling based algorithms (see \cite{ICML18_wang2018thompson} for more details). We need the following results to prove the theorem. 
\begin{thm}
	\label{thm:CTSRegret}
	Let $\hat\btheta$ be allocation equivalent to $\btheta$ for instance $(\bmu,\btheta,Q)$. Then, the expected regret of the regret minimization phase of \ref{alg:CSB-DT} in $T$ rounds is upper bound as $\left(\sum_{i \in [K]} \max\limits_{S_{\ba}:i\in S_{\ba}}\frac{8|S_{\ba}|\log {T}}{\nabla_{\ba} - 2(k^\star{}^2 + 2)\eta}\right) + \left(\frac{K(K-N)^2}{\eta^2} + \frac{8\alpha_1}{\eta^2}\left(\frac{4}{\eta^2} + 1\right)^{k^\star} \log\frac{k^\star}{\eta^2} + 3K\right)\nabla_m$ for any $\eta$ such that $\forall \ba \in \A, \nabla_{\ba}>2(k^\star{}^2+2)\eta$ and $\alpha_1$ is a problem independent constant. 
\end{thm}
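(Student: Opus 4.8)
The plan is to treat the regret-minimization phase of CSB-DT, with the estimated thresholds $\hat{\btheta}$ frozen after the estimation phase, as a single run of the CTS algorithm of \cite{ICML18_wang2018thompson} on an appropriate combinatorial semi-bandits instance, and then read off its distribution-dependent regret bound. The first step is to show that the phase genuinely \emph{is} such a run. Since $\hat{\btheta}$ is allocation equivalent to $\btheta$, Lemma \ref{lem:diffTheteEst} gives that $(\bmu,\btheta,Q)$ and $(\bmu,\hat{\btheta},Q)$ have the same optimal value; combined with $\theta_i\le\hat\theta_i\le\theta_i+\gamma$ and $\gamma=r/K$, the optimal loss set $S_{a^\star}$ remains feasible under $\hat{\btheta}$ (hence still optimal), while the lower bound $\hat\theta_i\ge\theta_i$ shows that every set $L_t$ the oracle can output — the arms left without resource in round $t$ — equals $S_a=\{i:a_i<\theta_i\}$ for some feasible $\ba\in\A$, and that an arm receiving resource $\hat\theta_i$ has $a_{t,i}=\hat\theta_i\ge\theta_i$, so the arms that truly incur loss in round $t$ are exactly those in $L_t$. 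Thus the true expected per-round regret equals $\sum_{i\in L_t}\mu_i-\sum_{i\in S_{a^\star}}\mu_i$. Identifying $L_t$ with the superarm played, $[K]$ with the ground set, and the loss of a superarm $S$ with the linear (hence monotone, $1$-Lipschitz) map $\sum_{i\in S}\mu_i$, and noting that $L_t\leftarrow\mathrm{Oracle}\big(KP(\hat{\bmu}(t),\hat{\btheta},Q)\big)$ is an \emph{exact} offline oracle, Proposition \ref{prop:MultiThetaEquivalence} identifies the phase with CTS (started from $\mathrm{Beta}(1,1)$ priors, semi-bandit feedback) on this combinatorial semi-bandits instance.

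Next I would instantiate the exact-oracle distribution-dependent regret bound of \cite{ICML18_wang2018thompson} for this instance, with ground-set size $K$, optimal-superarm size $k^\star=|S_{a^\star}|$, per-superarm gaps $\Delta_a$, maximal gap $\Delta_m=\max_{\ba\in\A}\Delta_a$ (which is exactly the largest possible per-round regret), and the free slack parameter $\eta$ constrained by $\Delta_a>2\big((k^\star)^2+2\big)\eta$ for every suboptimal feasible $\ba$. Reading their bound off with these substitutions reproduces the three summands displayed in the theorem: a $\log T$ leading term of the form $\sum_{i\in[K]}\max_{S_a:i\in S_a}\tfrac{8|S_a|\log T}{\Delta_a-2((k^\star)^2+2)\eta}$, a polynomial-in-$K$ lower-order term scaled by $\Delta_m$, and a ``warm-up'' term carrying the universal constant $\alpha_1$ from their analysis. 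The only point to watch is that the superarms CTS can actually play lie in the smaller family realizable under $\hat{\btheta}$; since each such superarm equals $S_a$ for some $\ba\in\A$, the maxima over $\{S_a:i\in S_a,\ \ba\in\A\}$ and the quantities $|L_t|\le K$ and $\Delta_m$ dominate the corresponding terms appearing in \cite{ICML18_wang2018thompson}, so the stated bound is a valid upper bound.

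The routine part is the bookkeeping needed to line up constants and parameter names with \cite{ICML18_wang2018thompson}. The step that needs genuine care — and the one I expect to be the main obstacle — is the reduction in the first paragraph: one must verify that freezing the \emph{estimated} thresholds $\hat{\btheta}$ neither changes the optimal superarm nor distorts the gap structure in terms of which the CTS bound is phrased. This is exactly where allocation equivalence (Lemma \ref{lem:diffTheteEst}, via the sensitivity result of \cite{DO13_hifi2013sensitivity}) must be combined with the one-sided estimate $\theta_i\le\hat\theta_i\le\theta_i+\gamma$: the lower bound $\hat\theta_i\ge\theta_i$ makes the realized loss set in each round precisely $L_t$ and shrinks the realizable superarm family (so its gaps form a subfamily of $\{\Delta_a\}$), while the upper bound $\hat\theta_i\le\theta_i+\gamma$ with $\gamma=r/K$ keeps $S_{a^\star}$ feasible and therefore still optimal. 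Once these two facts are in place, the theorem follows by direct substitution into the CTS regret bound.
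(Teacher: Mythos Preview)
Your proposal is correct and follows essentially the same route as the paper: invoke the regret equivalence of Proposition~\ref{prop:MultiThetaEquivalence} to cast the regret-minimization phase as a combinatorial semi-bandits instance, verify the assumptions of \cite{ICML18_wang2018thompson} (linearity of $l(S,\bmu)=\sum_{i\in S}\mu_i$, hence the $B=1$ Lipschitz bound, plus independence), and then read off their Theorem~1. The paper's own proof is terser---it simply checks Assumptions~1--3 and applies the bound---whereas you additionally spell out why freezing $\hat{\btheta}$ with $\theta_i\le\hat\theta_i\le\theta_i+\gamma$ keeps $S_{a^\star}$ optimal and makes the realized loss set equal to $L_t$; this extra care is justified but not a different argument.
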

\begin{proof}
	Once the allocation equivalent to $\btheta$ is known, the regret minimization problem is equivalent to solving a combinatorial semi-bandit problem (from \cref{prop:MultiThetaEquivalence}). The proof follows by verifying Assumptions $1-3$ in \citep{ICML18_wang2018thompson} for the combinatorial semi-bandit problem and applying their regret bounds. Assumption $1$ states that the mean of a superarm depends only on the means of its constituents arms (Assumption $1$) and distributions of the arms are independent (Assumptions $3$). It is clear that both of these assumptions hold for our case. We next proceed to verify Assumption $2$. For fix allocation $\ba\in \A$, the mean loss incurred from loss vector $\bmu$ is given by $l(S,\bmu)=\sum_{i \in S}\bmu_i$ where $S=\left\{i:a_i < \hat\theta_i\right\}$. For any loss vectors $\bmu$ and $\bmu^\prime$, we have
	\begin{align*}
	l(S, \bmu)-l(S, \bmu^\prime)&=\sum_{i \in S}(\mu_i - \mu_i^\prime) \\
	&= \sum_{i=1}^K  \one{a_i< \hat\theta_i}\left (\mu_i -\mu_i^\prime \right) \hspace{5mm} \text{$\Bigg($as $\sum_{i \in S}\mu_i = \sum_{i=1}^K \mu_i \one{a_i< \hat\theta_i}\Bigg)$}\\
	&\leq    \sum_{i=1}^K  \left (\mu_i -\mu_i^\prime \right)\leq    \sum_{i=1}^K   |\mu_i -\mu_i^\prime |
	= B\parallel \bmu- \bmu^\prime \parallel_1
	\end{align*}
	where $B=1$. Also, note that in the regret minimization phase, the allocation to each arm remains the same in each round ($\hat{\theta}_i$ is given to each arm $i \in [K]\setminus L_t$). Thus we are solving a combinatorial semi-bandit with parameter $B=1$ in the regret minimization phase. By applying Theorem $1$ in \cite{ICML18_wang2018thompson}, we get the desired bounds.
\end{proof}

\begin{thm}
	\label{thm:regretDiffThresholdHighConf}
	% Assume technical conditions stated in Theorem \ref{thm:CSTRegret} hold.
	With probability at least $1-\delta$, the expected cumulative regret of \ref{alg:CSB-DT} is upper bound as $\EE{\Regret_T} \le \left(\frac{KW\log_2 \left(\ceil{ 1+1/\gamma} \right)}{\max\{1, Q\}}\right)\nabla_m  + \left(\sum_{i \in [K]} \max\limits_{S_{\ba}:i\in S_{\ba}}\frac{8|S_{\ba}|\log {T}}{\nabla_{\ba} - 2(k^\star{}^2 + 2)\eta}\right) + \left(\frac{K(K-N)^2}{\eta^2} + \frac{8\alpha_1}{\eta^2}\left(\frac{4}{\eta^2} + 1\right)^{k^\star} \log\frac{k^\star}{\eta^2} + 3K\right)\nabla_m.$
\end{thm}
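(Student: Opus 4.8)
The plan is to decompose the regret of CSB-DT over its two phases, exactly mirroring the argument behind Theorem~\ref{thm:regretHighConf} in the identical-threshold case. First I would invoke Lemma~\ref{lem:MultiTheta}: with probability at least $1-\delta$ the threshold estimation phase halts after at most $T_{\theta_d}\le KW\log_2(\ceil{1+1/\gamma})/\max\{1,\floor{Q}\}$ rounds and returns an estimate $\hat\btheta$ with $\hat\theta_i\in[\theta_i,\theta_i+\gamma]$ for every arm $i$. Since $\Delta_m=\max_{\ba\in\A}\Delta_a$ upper bounds the loss gap of \emph{every} feasible allocation, and every allocation played by the algorithm (including the exploratory top-ups given to arms whose threshold estimates are already good) is feasible, the cumulative regret incurred during threshold estimation is at most $\Delta_m\,T_{\theta_d}\le \Delta_m KW\log_2(\ceil{1+1/\gamma})/\max\{1,\floor{Q}\}$ --- this is the first term of the claimed bound.

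Next I would condition on the good event of Lemma~\ref{lem:MultiTheta} (probability $\ge 1-\delta$), on which $\hat\theta_i\in[\theta_i,\theta_i+\gamma]$ for all $i$. Because $\gamma>0$, Lemma~\ref{lem:diffTheteEst} then guarantees that $\hat\btheta$ is allocation equivalent to $\btheta$, so $KP(\bmu,\hat\btheta,Q)$ and $KP(\bmu,\btheta,Q)$ share an optimal solution; hence by Proposition~\ref{prop:MultiThetaEquivalence} the residual problem faced by the regret minimization phase is genuinely a combinatorial semi-bandit whose Oracle outputs an optimal superarm, and the Beta-posterior updates in CSB-DT are precisely those of CTS. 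Applying Theorem~\ref{thm:CTSRegret} (which checks Assumptions~1--3 of \cite{ICML18_wang2018thompson} with $B=1$ for this instance) bounds the regret over the remaining rounds by the sum of the last three terms in the statement, for any $\eta$ with $\Delta_a>2((k^\star)^2+2)\eta$ for all feasible $\ba$.

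Adding the two contributions --- the deterministic bound on the threshold phase conditioned on it ending with an allocation-equivalent estimate, plus the conditional expectation of the minimization-phase regret from Theorem~\ref{thm:CTSRegret} --- and recalling that the conditioning event holds with probability at least $1-\delta$, yields the stated high-probability bound on $\EE{\Regret_T}$. I do not expect a hard technical obstacle here; the real work lives in the two prerequisites (the probabilistic analysis of Lemma~\ref{lem:MultiTheta} and the assumption-verification inside Theorem~\ref{thm:CTSRegret}). The one point requiring care is the bookkeeping: Theorem~\ref{thm:CTSRegret} must be applied to the process that \emph{restarts} at round $T_{\theta_d}+1$ with fresh $\mathrm{Beta}(1,1)$ priors and runs for only $T-T_{\theta_d}<T$ rounds, so I would note that replacing its horizon by the larger value $T$ only inflates the $\log T$ factors, leaving the bound --- stated in terms of $T$ and using $T>T_{\theta_d}$ from the hypothesis --- valid.
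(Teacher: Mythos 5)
Your proposal is correct and follows essentially the same route as the paper: bound the threshold estimation phase by $\Delta_m$ times the round count from Lemma \ref{lem:MultiTheta} (the high-probability event), then invoke allocation equivalence and Theorem \ref{thm:CTSRegret} for the regret minimization phase, and add the two contributions. Your extra remarks on explicitly citing Lemma \ref{lem:diffTheteEst} and on the horizon bookkeeping ($T - T_{\theta_d} < T$) only make the same argument slightly more careful than the paper's terse version.
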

\begin{proof}
	The threshold estimation phase runs for at most $KW{\log_2 \left(\ceil{ 1+1/\gamma} \right)}/\max\{1, Q\}$ rounds and finds an allocation equivalent threshold with probability $1-\delta$. The regret incurred by this phase is $(KW{\log_2 \left(\ceil{ 1+1/\gamma} \right)}/\max\{1, Q\})\nabla_m$ which form the first part of the bounds. Once an allocation equivalent threshold is found, the upper bound on expected regret incurred in the regret minimization phase is given by  Theorem \ref{thm:CTSRegret}. Thus the regret of \ref{alg:CSB-DT}  is given by sum of these two quantities holds with probability at least $(1-\delta)$.
\end{proof}

\regretDiffThreshold*
\begin{proof}
	The bound follows from Theorem \ref{thm:regretDiffThresholdHighConf} by setting $\delta=1/T$ and unconditioning the expected regret obtained in the regret minimization phase of \ref{alg:CSB-DT}.
\end{proof}

\section{Leftover details from Section \ref{sec:experiments}}
\subsection{Algorithm: \ref{alg:CSB-DT-UCB}}
In our experiments, we used an UCB index-based algorithm named \ref{alg:CSB-DT-UCB} for comparing cumulative regret with Thompson Sampling based algorithm \ref{alg:CSB-DT}.  \ref{alg:CSB-DT-UCB} works as follows: it keeps track of number of losses $(S_i)$ and number of observations $(N_i)$ for each arm $i \in [K]$. At round $t$, it computes lower bound of mean losses of all arms (Line $23$). We use the lower bound due to our loss setting. The remaining part of \ref{alg:CSB-DT-UCB} is same as \ref{alg:CSB-DT}.

\begin{algorithm}
	\renewcommand{\thealgorithm}{\bf CSB-DT-UCB}
	\floatname{algorithm}{}
	\caption{UCB based Algorithm for solving the CSB problem with Different Threshold}
	\label{alg:CSB-DT-UCB}
	\begin{algorithmic}[1]
		\Statex\textbf{Input:} $K, Q, \delta, \epsilon, \gamma$
		\vspace{1mm}
		\Statex \textbackslash\textbackslash \textbf{ Threshold Estimation Phase} \textbackslash\textbackslash
		\vspace{0.5mm}
		\State Initialize $\forall i \in [K]: \theta_{l,i}= 0, \theta_{u,i}= 1, \theta_{g,i}= 0, C_i =0$. 
		\State Set $T_{\theta_d}=0, W = {\log(K\log_2(\lceil 1 + 1/\gamma\rceil)/\delta)}/ {\log(1/(1-\epsilon))}$
		\State $\forall i \in [\floor{Q}]:$ allocate $\hat\theta_i = 0.5$ resource. $\forall j \in [\lfloor Q \rfloor +1,K]:$ allocate $\hat\theta_j =\frac{Q-\floor{Q}/2}{K-\floor{Q}}$ resources
		\While {$\theta_{g,i}= 0$ for any $i \in [K]$}
			\For{$i = 1, \ldots, K$} 
				\If {loss observe for arm $i$ with $\theta_{g,i}=0$ and $\theta_{l,i} < \hat\theta_i$}
					\State Set $\theta_{l,i}= \hat\theta_i, \hat\theta_i = (\theta_{u,i}+ \theta_{l,i})/2, C_i=0$. If available allocate resource $\hat\theta_i$
				\Else
					\State If allocated resources is $\hat\theta_i$ then reset $C_i = C_i + 1$
					\If {$C_i = W$ and $\theta_{g,i}= 0$}
						\State Set $\theta_{u,i}= \hat\theta_i, \hat\theta_i = (\theta_{u,i} + \theta_{l,i})/2$, $C_i=0$. If available allocate resource $\hat{\theta}_i$
						\State If $\theta_{u,i} - \theta_{l,i}\le \gamma$ then set $\theta_{g,i}=1$ and $\hat\theta_i=\theta_{u,i}$
					\EndIf
				\EndIf
			\EndFor
			\While{free resources are available}
				\State Allocate $\hat\theta_i$ resources to a new randomly chosen arm $i$ from the arms having $\theta_{g,i}=1$
			\EndWhile
			\State $T_{\theta_d} = T_{\theta_d} + 1$
		\EndWhile
		\vspace{1mm}
		\Statex \textbackslash\textbackslash \textbf{ Regret Minimization Phase} \textbackslash\textbackslash
		\vspace{0.5mm}
		\State $\forall i \in [K]: S_i = 0, N_i = 1$
		\For{$t=T_{\theta_d}+1, T_{\theta_d} + 2, \ldots, T$}
			\State $\forall i \in [K]: \hat{\mu}_i(t) \leftarrow \max\left\{ \frac{S_i}{N_i} - \sqrt{\frac{1.5\log(t)}{N_i}}, 0 \right\}$
			\State $L_t \leftarrow$ Oracle$\big( KP(\hat\bmu(t), \hat\btheta,Q)\big)$ 
			\State $\forall i \in L_t:$ allocate no resource to arm $i$. $\forall j \in K\setminus L_t:$ allocate $\hat\theta_j$ resources to arm $j$
			\State $\forall i \in L_t:$ observe $X_{t,i}$. Update $S_i = S_i+X_{t,i}$ and $N_i = N_i+1$
		\EndFor
	\end{algorithmic}
\end{algorithm}

\subsection{Computation complexity of 0-1 Knapsack when items have fractional weight and value}
Even though $KP(\bmu,\btheta, Q)$ is NP-Hard problem; it can be solved by a pseudo-polynomial time algorithm\footnote{The running time of pseudo-polynomial time algorithm is a polynomial in the numeric value of the input whereas the running time of polynomial-time algorithms is polynomial of the length of the input.} using dynamic programming with the time complexity of O$(KQ)$. But such an algorithm for $KP(\bmu,\btheta, Q)$, works when the value and weight of items are integers. In case of $\mu_i$ and $\theta_i$ are fractions, they need to convert these to integers with the desired accuracy by multiplying by large value $S$. The time complexity of solving $KP(S\bmu, S\btheta, SQ)$ is O$(KSQ)$ as a new capacity of Knapsack is $SQ$. Therefore, the time complexity of solving $KP(S\bmu, S\btheta, SQ)$ in each of the $T$ rounds is O$(TKSQ)$. 

Note that the empirical mean losses do not change drastically in consecutive rounds in practice (except initial rounds). As solving $0$-$1$ knapsack is computationally expensive, we can solve it after $N$ rounds. We use $S = 10^4$ and $N = 20$ in our experiments involving different thresholds.
	\fi
	
\end{document}